\newtheorem{theorem}{Theorem}[section]
\newtheorem{assumption}[theorem]{Assumption}
\newtheorem{lemma}[theorem]{Lemma}
\newtheorem{definition}[theorem]{Definition}
\newtheorem{proposition}[theorem]{Proposition}
\newtheorem{corollary}[theorem]{Corollary}
\newtheorem{remark}[theorem]{Remark}
\newcommand*{\R}{\mathbb{R}}
\newcommand*{\One}{\mathds{1}}
\newcommand*{\Po}{\mathrm{\Pi}_\One}
\newcommand*{\Pop}{\mathrm{\Pi}_\One^\perp}
\newcommand*{\LN}{\mathbf{LN}}
\newcommand*{\E}{\mathbb{E}}
\newcommand{\Sim}{\mathbf{t}_{sim}}
\newcommand{\Div}{\mathbf{t}_{div}}
\newcommand*{\Prob}{\mathbb{P}}
\newcommand{\spanOne}{\mathrm{span}\{\One\}}
\title{
Why ``classic" Transformers are shallow \\
and how to make them go deep
}
\author{Yueyao Yu}
\address{School of Science and Engineering,  The Chinese University of Hong Kong, Shenzhen}
\email{yueyaoyu@link.cuhk.edu.cn}
\author{Yin Zhang}
\address{School of Data Science, The Chinese University of Hong Kong, Shenzhen}
\email{yinzhang@cuhk.edu.cn}
\date{\today}  
\begin{document}
\maketitle


\begin{abstract}
{Since its introduction in 2017, Transformer has emerged as the leading neural network architecture, catalyzing revolutionary advancements in many AI disciplines.  The key innovation in Transformer is a Self-Attention (SA) mechanism designed to capture contextual information.  However, extending the original Transformer design to models of greater depth has proven exceedingly challenging, if not impossible.  Even though various modifications have been proposed in order to stack more layers of SA mechanism into deeper models, a full understanding of this depth problem remains lacking.
In this paper, we conduct a comprehensive investigation, both theoretically and empirically, to substantiate the claim that the depth problem is caused by \emph{token similarity escalation}; that is, tokens grow increasingly alike after repeated applications of the SA mechanism.  Our analysis reveals that, driven by the invariant leading eigenspace and large spectral gaps of attention matrices, token similarity provably escalates at a linear rate.
%
Based on the gained insight, we propose a new strategy of surgically removing excessive similarity in contrast to the existing approach of diminishing the SA mechanism explicitly or implicitly (such as in pre-norm transformers). Preliminary experimental results confirm the effectiveness of the proposed strategy in small-scale post-norm Transformer models.
}

\end{abstract}

\section{Introduction}

The Transformer architecture~\cite{vaswani2017attention} for neural networks, incorporating self-attention (SA) mechanisms~\cite{vaswani2017attention},  residual connections~\cite{He_2016_CVPR}, layer normalizations~\cite{neyshabur2015norm} and conventional feedforward networks, has revolutionized various areas of AI, including natural language processing, computer vision and beyond~\cite{bommasani2021opportunities, brown2020language, dosovitskiy2020image,  kenton2019bert, liu2021swin, vaswani2017attention}.  One of the key strengths of Transformers lies in their scalability, enabling significant performance improvements through the use of larger models, more data, and increased computational resources~\cite{brown2020language, radford2019language, touvron2023llama}.  
\textcolor{black}{However, further increasing the depth of transformers is by no means a task without obstacles.  Some authors, including  \cite{ethayarajh2019contextual, gao2018representation},  have observed that the representation power of Transformer-based deep models is rather limited to the extent that the learned embeddings only occupy a small portion of the representation space.} 

\subsection{Token Similarity}

Recent investigations on deep Transformer models, including but not limited to \cite{dong2021attention,  ethayarajh2019contextual, gao2018representation, li2020sentence, mu2018all, noci2022signal,   yan2022addressing}, have shed light on the occurrence of a phenomenon that has been called by different names, including \textit{token uniformity}, \textit{rank collapse}, \textit{representation degeneration} and \textit{representation anisotropy}.  
In this paper, we will use the term \textit{token similarity} with a precise and distinctive definition.
In plain language, token similarity means that as a representation matrix $X\in \R^{n \times d}$ traverses through layers of a Transformer model, the $n$ rows, or tokens as they are called, in $X$ grow increasingly similar to each other, thus substantially reducing the model's expressive capacity and hindering the training of the model~\cite{noci2022signal}. 

To quantitatively measure token similarity, researchers have proposed a number of measurement methods. For example, the following cosine similarity~\cite{ethayarajh2019contextual} calculates the average cosine similarity between all pairs of tokens,
\begin{equation}
    \mathbf{t}_{cos}(X) := \frac{2}{n^2-n} \sum_{i=1}^n \sum_{j>i}^n \frac{x_i^T x_j }{\|x_i\| \|x_j\|},
\end{equation}
where $x_i \in \R^d$ is the $i$-th row of $X\in \R^{n \times d}$ and $\|\cdot\|$ is the Euclidean norm.

To facilitate our analysis, we introduce the following definition of token similarity for any given representation matrix $X\in \R^{n \times d}$.
\begin{definition}
Given any non-zero matrix $X \in \R^{n \times d}$, the token similarity of $X$ is
\begin{equation*} \label{eq: unif}
\Sim(X) := {\|\Po X\|_F^2}/{\|X\|_F^2} ~\in~ [0,1] 
\end{equation*}
where $\Po = \One\One^T/n\in\R^{n\times n}$ and $\One$ is the vector of all ones in $\R^n$.
The token diversity of $X$ is
\[
\Div(X) := {\|\Pop X\|_F^2}/{\|X\|_F^2} ~\in~ [0,1],
\]
where $\Pop = I-\Po\in\R^{n\times n}$. Clearly, the similarity and diversity of $X$ sum up to unity. 
For convenience, we will occasionally also use the notation: $\Pi_1=\Po$ and $\Pi_2=\Pop$.
\end{definition}

We observe that $\Sim(X)=1$, or $\Div(X)=0$, if and only if $X = \One v^T$ for some $v\in\R^d$; that is, all rows of $X$ are the same. 

We start with examining how token similarity evolves in two well-known Transformer models: BERT~\cite{kenton2019bert} and ALBERT~\cite{lan2019albert}, both following the original (or classic) encoder architecture as proposed by \cite{vaswani2017attention}.   
{We utilized the package Hugging~Face~\cite{wolf2019huggingface} to conduct this experiment where we set the model depth to 100 for both BERT and ALBERT and initialize model weights by the package default.}
In Figure~\ref{fig:similarity}, we plot token similarity $\Sim(X)$, cosine similarity $\mathbf{t}_{cos}(X)$ and, for BERT model, gradient norm (with respect to model weights) at each Transformer layer (or block, as is often called).\footnote{We excluded ALBERT from the gradient norm experiment since it uses the same set of weights throughout all blocks.}
From Figure~\ref{fig:similarity}, we observe that, as the depth goes deep,  in either model both token similarity and cosine similarity escalate to unity, where the escalation patterns for the two measures appear almost identical.  We also observe that the gradient norm value in BERT remains roughly at a constant level, which indicates that in this case gradient vanishing or exploding is not occurring. 

\begin{figure}[ht]
	\begin{center}
		\vspace{-.1cm}		
        \includegraphics[width=.9\textwidth,trim=0 0 0 0, clip]{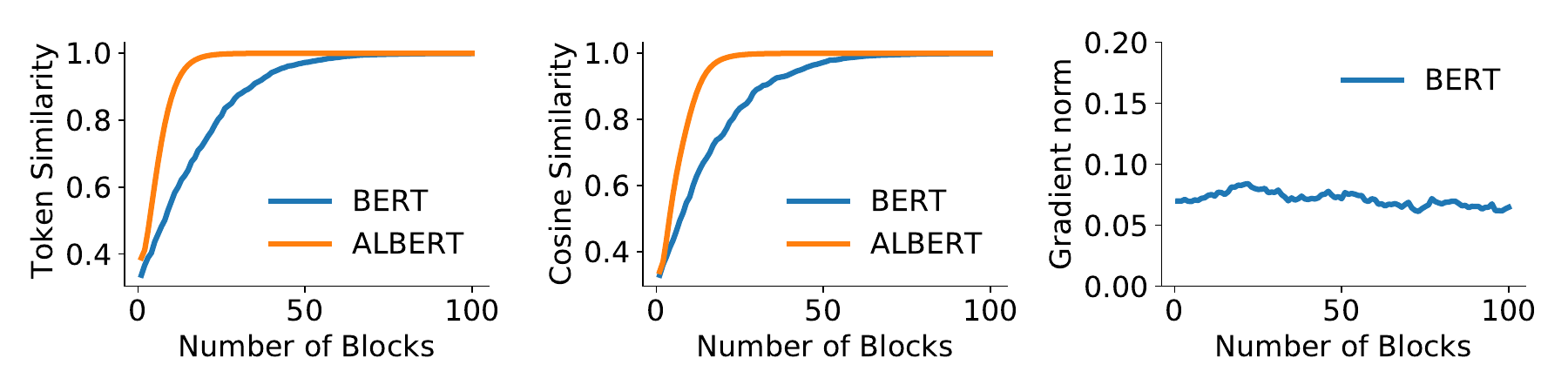}
		\vspace{-.1cm}
		\caption{Values of token similarity~(left), cosine similarity~(middle), 
		and gradient norm~(right) at each block in 2 Transformer models at default initialization. }
		\label{fig:similarity}
	\end{center}
	\vspace{-.2cm}
\end{figure}

{As mentioned earlier, it remains to be fully understood why classic Transformer architecture yields outstanding results with shallow models but becomes ``useless" with deeper ones~\cite{takase2023b2t}.  Our experiments in Figure~\ref{fig:similarity} strongly suggest that the root cause be token similarity escalation. In this paper, we conduct a systematic analysis to substantiate this assertion and to address two fundamental questions: why token similarity escalates and how fast it escalates.  
Our theoretical and empirical results will provide definitive answers to these questions.}

{In the sequel, we will use capitalized Transformer(s) to refer to models based on the ``classic" architecture as originally proposed by~\cite{vaswani2017attention} (focusing on the encoder side), while subsequently proposed modifications will be referred to as transformers in small letters.}

\subsection{Related Works}
A number of recent works have noticed depth-related problems with deep Transformers; for example, the learned word embeddings may degenerate into lying in a narrow cone~\cite{ethayarajh2019contextual}.  \cite{dong2021attention} prove that in pure self-attention models without skip connections, the representation matrix would converge to a rank-one matrix with identical rows, a situation that they call token uniformity, while they also claim that skip-connections should be able to resolve this problem.  However, more recent works~\cite{yan2022addressing, noci2022signal} for example, have observed that residual models still encounter the same token uniformity problem.  In this paper, we will use the term token similarity in order to utilize a distinct definition and avoid possible confusion with previously defined measures.

A variety of approaches have been explored to tackle difficulties caused by or related to token similarity. Some works have incorporated regularization terms into the training objectives~\cite{gao2018representation, wang2019improving, zhang2020revisiting} or introduced contrastive learning~\cite{gao2021simcse, qiu2022contrastive} to alleviate the so-called anisotropy problem.  Additionally, researchers have adopted post-processing strategies to normalize word or sentence embeddings~\cite{ huang2021whiteningbert, mu2018all} or transform learned representations into alternative distributions~\cite{li2020sentence, yan2022addressing} to obtain more isotropic representations. Moreover, the authors in \cite{noci2022signal} use so-called residual branch scaling~\cite{bachlechner2021rezero} to slow down rank collapse. The authors in \cite{he2023deep} propose an approach termed ``next-token prediction'' to slow down rank collapse in skip-less Transformers (at a cost of prolonged training time).  In essence, these techniques reduce, explicitly or implicitly, the role of self-attention relative to other operations.

Some researchers try to explain difficulties in deeper Transformers from the perspective of gradient instability. For example, in \cite{wang2019learning, liu2020understanding, xiong2020layer}, the authors conclude that training deep Transformers of the original design is unstable, resulting in bad performance.
{Recently, the authors in \cite{noci2022signal} prove that when $X$ is rank-one, then certain gradient components would vanish at initialization. the authors in \cite{zhai2023stabilizing} observe that the gradient is unstable when the so-called attention entropy collapses. On the other hand,  the authors in \cite{takase2023b2t} give empirical evidence that at least in an encoder, the gradient norms in different blocks do not rise or fall significantly.}

In the literature, the original Transformer architecture is now often called the post-norm (or post-LN) architecture where layer normalizations are applied after residual connections are added \cite{vaswani2017attention}.   To address difficulties in deep post-norm models, a variant called pre-norm (or pre-LN) architecture has been proposed~\cite{wang2019learning, xiong2020layer}, where layer normalizations are applied to the input of each sub-layer. It appears that contemporary large language models, for example in \cite{brown2020language, touvron2023llama}, have widely adopted the pre-norm architecture.  Nevertheless, with shallow models a noticeable reduction in generalization performance has been reported for pre-norm models in comparison to their post-norm counterparts~\cite{wang2019learning,xiong2020layer}.   In this paper, our analysis and experiments concentrate primarily on the classic, post-norm architecture.  

\subsection{Contributions}

Despite some theoretical works~\cite{dong2021attention, noci2022signal}  on factors influencing token similarity, our literature review suggests that there has been no analysis that accurately quantifies the token similarity dynamics in the original Transformer architecture: why escalation starts and how fast it develops. This work provides such a quantitative analysis for the first time.

\begin{itemize}

    \item     
    {By analyzing a well-defined measure of token similarity dynamics, we prove that the SA-plus-residual sub-layer in the Transformer architecture increases token similarity by default upon standard initialization and in expectation. Our theory, together with strong empirical evidence, reveals why token similarity escalates and how fast it does so. That is, (i) the driving force behind the escalation is none else but the invariant leading eigenspace of and large spectral gaps in self-attention matrices (while other operations in the Transformer block do not interfere with the escalation process); and (ii) the similarity measure converges to 1 at a global linear rate which eventually accelerates to the local rate of 1/2 under the standard setting.}
  
    \item 
    {Based on the insights gained from our analysis, we propose a simple de-escalation strategy to remove excessive similarity and restore expressivity in deep Transformer models.  Our preliminary experiments have confirmed the efficacy of the proposed strategy, substantially improving the training quality for very deep post-norm Transformers.  In contrary to existing techniques, our proposed method does not discount, explicitly or implicitly, the role of self-attention mechanism relative to other components.}

\end{itemize}

\subsection{Notation}

In general, matrices are denoted by upper-case letters and vectors by lower-case letters.
For any square matrix $M$, let $\lambda_i(M)$ be the $i$-th eigenvalue of $M$ arranged in a descending order in magnitude unless otherwise specified. 
We use $[W]_{ij}$ to denote the $ij$-th element of a matrix $W$, and will do so similarly for vector elements as well.  We denote $\E[\cdot] := \mathbb{E}_{[W]_{ij}\sim w}[\cdot]$. By default, the vector norm $\|\cdot\|$ is the Euclidean norm.
The symbol $\One$ denotes the vector of all ones in $\R^n$, and $e = \One/\sqrt{n}$.   We reiterate that capitalized Transformers are reserved for models based on the classic architecture, while later proposed modifications are referred to as transformers in small letters.
We will occasionally use the notation $\Pi_1\equiv \Po$ and $\Pi_2\equiv \Pop$.
For brevity, we will use the acronym TSE for Token Similarity Escalation.


\section{Analysis of TSE in Transformer}

In general, a Transformer model can comprise both an encoder and a decoder. In this paper, our focus will be solely on the encoder side of Transformer models.  Precisely speaking, our analysis is for Transformer encoders with random weights (which is the case at initialization).

\subsection{Transformer Architecture}

We will start with single-head Transformer layers and then show that our results extend to multi-head layers as well.  
An $L$-layer (encoder only) Transformer is the repeated composition of a layer function, say, $Y = \mathbf{postLN}(X)$, by $L$ times:
\begin{equation*}
Y = \mathbf{Transformer}(X) := 
\underbrace{\mathbf{postLN}\circ\cdots\circ\mathbf{postLN}}_{L ~\mbox{times}}\,(X).
\end{equation*}

We formalize the Transformer layer function $Y = \mathbf{postLN}(X)$ into Algorithm~\ref{alg:layer} below, which follows exactly the original architecture as proposed in \cite{vaswani2017attention}, and is organized into four steps for our treatment convenience.  
A few comments about the Transformer block in Algorithm~\ref{alg:layer} are in order.
\begin{algorithm2e}[ht!]	
\label{alg:layer}
\vspace{2mm}			
    \caption{$Y=\mathbf{postLN}(X)$}
    \KwIn{$X \in \R^{n \times d}$ (with weights $W, W_1, W_2$ and $\alpha > 0$).}
    \setcounter{AlgoLine}{-1}
    Compute a row-stochastic, attention matrix $P = P(X)\in \R^{n \times n}$.\\
    SA plus Residual: \hspace{5.5mm}$Y_1 =  PXW\alpha+X$.\\
    Layer Normalization: $Y_2 =  \LN(Y_1)$.\\
    FFN plus Residual:  \hspace{2.5mm}$Y_3 =  \phi(Y_2W_1)W_2+Y_2$.\\
    Layer Normalization: $Y_4 =  \LN(Y_3)$.\\
    \KwOut{$Y := Y_4\in \R^{n \times d}$} 
\end{algorithm2e}

\begin{itemize}
\item 
In Step~0, we allow the generality of utilizing different formulas for computing an attention matrix $P=P(X)$.  Besides $X$, such formulas also involve their own learnable weights that are not explicitly shown.  
\item 
The first term in Step~1 is called self-attention (SA) where the attention matrix $P(X)$ is applied to $X$ itself to form a nonlinear operation that also includes the multiplication by a weight matrix $W \in \R^{d \times d}$ from the right.   Next to $W$, a positive parameter $\alpha$ is introduced to balance the contributions from the SA mechanism relative to the residual (or skip connection) term $X$.  For weight matrices of a given norm, the smaller $\alpha$ is, the lesser role SA would play relative to residual. 
\item 
The first term in Step~3 is a feedforward network (FFN) with activation function $\phi(\cdot)$ and two weight matrices $W_1, W_2^T \in \R^{d \times q}$ where the column size $q$ is usually greater than $d$.  In our experiments, we will use the popular ReLU function for activation by default unless otherwise specified.
\item 
In Steps~2 and 4, layer-normalizations are applied row-wise to the input matrix.  More specifically, for any vector $x$, $\LN(x) = \gamma(x-\mu)/\sigma + \lambda$ where $\mu$ is the mean and $\sigma$ is the standard deviation of $x$, and $\gamma$ and $\lambda$ are learnable scalar parameters.
\end{itemize}

The impact of weight matrix initializations on model training has been widely studied since the use of an initialization scheme may determine the success or failure of training, see a recent survey~\cite{narkhede2022review} on this subject.  In our analysis and experiments, we will investigate the TSE phenomenon mostly under the widely used Xavier initialization~\cite{glorot2010understanding}.

{In our experiments, we use the standard softmax formula~\cite{bridle1990probabilistic} to construct attention matrices $P$ (though we have tried other formulas without notable differences), which is defined as follows. For $X \in \R^{n\times d}$, compute $M = XW_q(XW_k)^T/\sqrt{d} \in \R^{n\times n}$ and let
\begin{equation}\label{softmax}
[P(X)]_{ij} := \exp\left( [M]_{ij}\right)/ \sum_{j=1}^n \exp \left([M]_{ij} \right).
\end{equation}
where $W_q$ and $W_k$ are weight matrices with elements drawn uniformly from $(-1,1)/\sqrt{d}$.}

\subsection{How Self-Attention Drives TSE}

In this subsection, we conduct a comprehensive analysis on how token similarity escalates after each time the self-attention step $Y = X + \alpha PXW$ is carried out.   We first motivate our analysis using a power-method-based intuition.

\subsubsection{An intuitive interpretation} 
The reason behind TSE can be intuitively explained by extending the idea of the power method for computing the largest eigenvector of a matrix.  
For any stochastic matrix $P$, the largest eigenvalue is always 1 corresponding to the leading eigenspace $\spanOne$.  From the convergence theory of power method, we know that if $|\lambda_2(P)|<1$ and $\Po x \ne 0$, then $P \cdots PPx$ converges to $\spanOne$ as the number of multiplications goes to infinity.  By the same argument,  for a sequence of stochastic matrices $\{P_k\}$ for which $\{|\lambda_2(P_k)|\}$ is bounded away from 1 (in fact a much weaker condition will suffice), then $P_k\cdots P_2 P_1 x$ will also converge to $\spanOne$ as $k$ goes to infinity.  Furthermore, for any $\alpha > 0$ there also holds that, as $k \to \infty$ and with some normalization sequence $\{c_k\}\subset \R$,
\[
c_k(I+\alpha P_k)\cdots(I+\alpha P_2)(I+\alpha P_1)x \to \spanOne,
\]
since the leading eigenspace is invariant under the above scaling and shifting operations. 
 
%
Now consider a sequence of SA operations (including the residual connection): 
\[
\mathrm{SA}_i(X) = X+\alpha P_iXW_i, \;\; i = 1, 2, 3, \cdots
\] 
each of which is associated with a weight matrix $W_i\in\R^{d\times d}$ and a stochastic matrix $P_i \equiv P_i(X)\in\R^{n\times n}$.  In addition to the SA operation, each Transformer block also utilizes other operations (feedforward network and layer normalizations). However, these other operations have no impact on TSE which is solely driven by the SA operation, as will be demonstrated later.  Specifically, the driving force is the spectral properties of $P_i$, while the fact that $P_i$ depends on $X$ is inconsequential.  Hence, roughly speaking, the escalation process can be characterized as
\begin{equation}\label{limit:SA_k}
c_k (\mathrm{SA}_k \circ \cdots \circ \mathrm{SA}_2 \circ \mathrm{SA}_1)(X) 
\to \textrm{span}\{\One e_j^T\}_{j=1}^d, 
\end{equation}
where $e_j\in\R^d$ is the unit vector with the $j$-th entry being unity, and $\{c_k\}\subset \R$ is a proper normalization sequence. In terms of the TSE behavior, each of the above $\mathrm{SA}_i$ operators can be viewed as a linear operator with a fixed leading eigenspace given in the left-hand side of \eqref{limit:SA_k}. 
  

The above intuitive interpretation motivates us to conduct a comprehensive analysis on TSE.  To do so, we need to develop a rigorous approach to analyzing a precisely defined quantity that is critical for our TSE analysis.

\subsubsection{A theoretical analysis}
We start by stating some basic assumptions and facts.

\begin{assumption}\label{assumpt:0} 
In the self-attention formula $Y = X + \alpha PXW$, 
\begin{enumerate}
\item 
the matrix $W\in \R^{d \times d}$ is randomly initialized so that elements of $W$ are all independent with mean-zero and variance $\sigma^2$;
\item
the matrix $P$ is row-stochastic (or right-stochastic or Markov) so that $P \One=\One$ with the spectral radius equal to one (for example, see \cite{horn2012matrix}).  
\end{enumerate}
\end{assumption}

Under Assumption~\ref{assumpt:0}(1), there hold
\begin{equation}\label{W moments}
\E[W]=0 \in \R^{d\times d} ~~~~\mbox{ and }~~~~ \E[WW^T] = d\sigma^2I \in \R^{n\times n} .
\end{equation}


We recall the definitions of token similarity, diversity and the relationship between them: 
\begin{equation*}
\Sim(X) = \|\Po X\|_F^2 / \|X\|_F^2,  \;\;\;
\Div(X)=\|\Pop X\|_F^2 / \|X\|_F^2,  \;\;\;
\Sim(X) + \Div(X)=1.
\end{equation*}
Token similarity of $X$ equals 1 implies the rank of $X$ being 1.  However, it is entirely possible that when token similarity is fairly close to 1, say $\Sim(X)=0.99$, $X$ is still numerically full-rank.  Therefore, using the continuous quantity $\Sim(X)$ to measure similarity is much more reasonable than using the discrete quantity rank as a measure of quality for representation (or embedding) matrices.  

We now introduce a critical quantity called TSE rate, or simply escalation rate.
\begin{definition}
For a pair of matrices $X, Y \in \R^{n\times d}$ so that $\Sim(X), \Sim(Y) \in (0,1)$, the escalation rate from $X$ to $Y$ is
\begin{equation}\label{def:r(X,Y)}
r(X,Y) := \frac{1-\Sim(X)}{1-\Sim(Y)} = \frac{\Div(X)}{\Div(Y)}.
\end{equation}
\end{definition}

Clearly, $r(X,Y)>1$ implies that $Y$ has a higher similarity (or a lower diversity) than $X$ does.  For $Y = X+\alpha PXW$, we aim at analyzing the expected value of $r(X,Y)$ with respect to the random matrix $W$ under Assumption~\ref{assumpt:0}.  The following proposition gives a key identity for the TSE rate $r(X,Y)$ that facilitates our analysis.

\begin{proposition}  \label{prop:r(X,Y)}
Given $X, Y \in \R^{n\times d}$ with $\Sim(X), \Sim(Y) \in (0,1)$, let $r(X,Y)$ be the escalation rate defined in \eqref{def:r(X,Y)}.  
Then the following identity holds
\begin{equation}\label{E3:rate}
r(X,Y) = 1+\left({\xi_1}/{\xi_2}-1\right)\Sim(X),
\end{equation}
where
\begin{equation}\label{def:xi's}
\xi_i \equiv \xi_i(X,Y) := \frac{\|\Pi_i Y\|_F^2}{\|\Pi_i X\|_F^2}, \;\; i = 1,2.  
\end{equation}
Therefore, $r(X,Y) > 1$ if and only if $\xi_1>\xi_2$.
\end{proposition}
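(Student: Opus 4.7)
The plan is to reduce the claimed identity to an elementary algebraic manipulation using the Pythagorean decomposition induced by the pair of orthogonal projectors $\Pi_1=\Po$ and $\Pi_2=\Pop$. Since $\Pi_1+\Pi_2=I$ and $\Pi_1\Pi_2=0$, for any matrix $Z\in\R^{n\times d}$ one has $\|Z\|_F^2 = \|\Pi_1 Z\|_F^2 + \|\Pi_2 Z\|_F^2$. This is the only structural fact I need; everything else is bookkeeping.

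First I would rewrite the escalation rate using the diversity form $r(X,Y)=\Div(X)/\Div(Y)$ and expand both numerator and denominator via the definitions, obtaining
\[
r(X,Y) \;=\; \frac{\|\Pi_2 X\|_F^2}{\|X\|_F^2}\cdot\frac{\|Y\|_F^2}{\|\Pi_2 Y\|_F^2}.
\]
Next I would apply the Pythagorean identity to $\|Y\|_F^2$ and substitute the definitions of $\xi_1,\xi_2$ to get $\|Y\|_F^2 = \xi_1\|\Pi_1 X\|_F^2 + \xi_2\|\Pi_2 X\|_F^2$, while the factor $\|\Pi_2 Y\|_F^2$ in the denominator becomes $\xi_2\|\Pi_2 X\|_F^2$. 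After canceling $\|\Pi_2 X\|_F^2$ and using Pythagoras on $\|X\|_F^2$ as well, the ratio collapses to
\[
r(X,Y) \;=\; \frac{(\xi_1/\xi_2)\,\|\Pi_1 X\|_F^2 + \|\Pi_2 X\|_F^2}{\|\Pi_1 X\|_F^2 + \|\Pi_2 X\|_F^2}.
\]

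Finally, I would divide numerator and denominator by $\|X\|_F^2$ and recognize $\|\Pi_1 X\|_F^2/\|X\|_F^2 = \Sim(X)$ and $\|\Pi_2 X\|_F^2/\|X\|_F^2 = 1-\Sim(X)$, yielding $r(X,Y) = (\xi_1/\xi_2)\Sim(X) + (1-\Sim(X)) = 1 + (\xi_1/\xi_2 - 1)\Sim(X)$, which is exactly \eqref{E3:rate}. The immediate corollary that $r(X,Y)>1 \iff \xi_1>\xi_2$ then follows from the fact that $\Sim(X)\in(0,1)$ by hypothesis.

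There is really no hard step here; the only thing to be careful about is ensuring that the denominators $\|\Pi_i X\|_F^2$ used to define $\xi_i$ are nonzero, which is guaranteed by the assumption $\Sim(X)\in(0,1)$ (so both $\|\Pi_1 X\|_F\neq 0$ and $\|\Pi_2 X\|_F\neq 0$), and similarly $\Sim(Y)\in(0,1)$ ensures $\Div(Y)\neq 0$ so $r(X,Y)$ itself is well defined. The proof is purely algebraic and requires neither the randomness of $W$ nor any structural assumption on $P$; it is a deterministic identity relating $r(X,Y)$, $\Sim(X)$, and the component-wise growth ratios $\xi_1,\xi_2$.
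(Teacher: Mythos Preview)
Your proposal is correct and follows essentially the same approach as the paper: both arguments use the Pythagorean decomposition $\|Z\|_F^2=\|\Pi_1 Z\|_F^2+\|\Pi_2 Z\|_F^2$ to express $\|Y\|_F^2/\|X\|_F^2$ as $\xi_1\Sim(X)+\xi_2\Div(X)$ and then divide by $\xi_2$ to obtain the identity. Your version spells out the Pythagorean step and the well-definedness of the $\xi_i$ a bit more explicitly, but the structure and substance of the argument are identical to the paper's.
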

\begin{proof}
We first note that $\|Y\|_F^2 / \|X\|_F^2 = \xi_1\Sim(X)+\xi_2\Div(X)$.
Then by direct calculations,
\begin{equation*}\label{ratio:Div in Apdx}
\frac{\Div(X)}{\Div(Y)} 
= \frac{\|Y\|_F^2 / \|X\|_F^2} {\|\Pop Y\|_F^2 / \|\Pop X\|_F^2}
= \frac{\xi_2(1-\Sim(X)) + \xi_1\Sim(X)} {\xi_2}
= 1+(\xi_1/\xi_2-1)\Sim(X).
\end{equation*}

The second statement is obvious.
\end{proof}

Proposition~\ref{prop:r(X,Y)} indicates that as long as $\xi_{1}>\xi_{2}$,  token similarity of $Y$ will be larger than that of $X$; in other words, there happens an escalation in token similarity from $X$ to $Y$. For the SA-plus-residual step $Y=X+\alpha PXW$, we aim to analyze the expected escalation rate with respect to $W$, that is,
\begin{equation}\label{eq:E[r]}
 \E[r(X,Y)] = 1+\left(\E\left[{\xi_1}/{\xi_2}\right]-1\right)\Sim(X).
\end{equation}

The roadmap of our analysis is as follows. In order to estimate $\E[\xi_{1}/\xi_{2}]$, which in general does not allow a closed-form formula, we show instead that under mild conditions $\xi_{1}/\xi_{2}$ is highly concentrated at $\E[\xi_{1}]/\E[\xi_{2}]$.  Then it will suffice to estimate $\E[\xi_{1}/\xi_{2}]$ through the two individual expected values, $\E[\xi_{1}]$ and $\E[\xi_{2}]$, which are calculated in the lemma below.

\begin{lemma}\label{lem:E(xi)}
Given $X \in \R^{n\times d}, P \in \R^{n\times n}, W \in \R^{d\times d}$ and $\alpha>0$, let $Y=X+\alpha PXW$. Under Assumption~\ref{assumpt:0}, the expected values of $\xi_i$, $i=1,2$, with respect to $W$ are, respectively,
\begin{equation} \label{E2:xi1-2}
   \E[\xi_i] = 1 + \alpha^2d\sigma^2 \mu_i^2, \;\; i = 1, 2,
\end{equation}
where $\xi_i$ are defined in~\eqref{def:xi's}, and
\begin{equation}\label{def:mu's}
\mu_i \equiv \mu_i(X,P) := \frac{\|\Pi_i PX\|_F}{\|\Pi_i X\|_F}, \;\; i=1,2.   
\end{equation}
Furthermore, there holds the bounds for $\mu_1$ and $\mu_2$, 
\begin{equation}\label{mu:bounds}
\mu_1^2 \ge (1-\omega)^2,  \;\;\;\; \mu_2^2 \le \delta^2,
\end{equation}
where 
\begin{equation}\label{def:omega-delta}
\omega = \|e^TP\Pop X\|/\|e^T X\| ~~~\mbox{ and }~~~ \delta = \|\Pop P\|_2.
\end{equation}
\end{lemma}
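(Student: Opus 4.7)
The plan is to handle the two claims separately: first compute $\E[\xi_i]$ by expanding the squared Frobenius norm and exploiting the first two moments of $W$; then derive the geometric bounds on $\mu_1, \mu_2$ by exploiting that $P\One=\One$ (equivalently $Pe=e$) makes $\mathrm{span}\{\One\}$ invariant under $P$.

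For the expected values, I would start from $\Pi_i Y = \Pi_i X + \alpha\,\Pi_i P X W$ and expand
\begin{equation*}
\|\Pi_i Y\|_F^2 = \|\Pi_i X\|_F^2 + 2\alpha\,\mathrm{tr}\bigl((\Pi_i X)^T \Pi_i PXW\bigr) + \alpha^2\,\|\Pi_i P X W\|_F^2.
\end{equation*}
Taking expectation in $W$, the cross term vanishes by $\E[W]=0$, and the third term becomes $\alpha^2\,\mathrm{tr}\bigl(\Pi_i PX\,\E[WW^T](\Pi_i PX)^T\bigr) = \alpha^2 d\sigma^2 \|\Pi_i PX\|_F^2$ using \eqref{W moments}. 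Dividing by $\|\Pi_i X\|_F^2$ yields \eqref{E2:xi1-2} directly. This part is mechanical.

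For the bound $\mu_2^2 \le \delta^2$, I would split $X = \Po X + \Pop X$ and use $P\One = \One$ (Assumption~\ref{assumpt:0}(2)) to observe that $P\Po X = \Po X$, so $\Pop P\Po X = 0$. Hence $\Pop PX = \Pop P\Pop X$, and the submultiplicativity of the operator norm gives $\|\Pop PX\|_F \le \|\Pop P\|_2\,\|\Pop X\|_F = \delta\,\|\Pi_2 X\|_F$, which is exactly $\mu_2 \le \delta$.

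For the bound $\mu_1^2 \ge (1-\omega)^2$, I would rewrite $\Pi_1 = ee^T$, so that $\|\Pi_1 Z\|_F = \|e^T Z\|$ for any $Z$. Again using $Pe = e$ (equivalently $e^TP e = 1$), I get $e^TP\Po X = e^T X$, whence $e^T PX = e^T X + e^T P\Pop X$. The reverse triangle inequality then yields $\|e^T PX\| \ge \bigl|\,\|e^T X\| - \|e^TP\Pop X\|\,\bigr|$, and dividing by $\|e^T X\|$ produces $\mu_1 \ge |1-\omega|$, hence $\mu_1^2 \ge (1-\omega)^2$.

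The only step requiring any care is making sure the invariance $Pe=e$ is applied in the correct direction on each side (from the left for $\mu_1$, from the right for $\mu_2$); once that is in place, both bounds fall out of a single application of the triangle or operator-norm inequality. I do not anticipate a genuine obstacle; the main point is the structural observation that $\Pi_1$ is an invariant subspace of $P$, which simultaneously kills the $\Pop\!\to\!\Po$ component and preserves the $\Po\!\to\!\Po$ component.
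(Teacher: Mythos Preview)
Your proposal is correct and follows essentially the same approach as the paper. The computation of $\E[\xi_i]$ and the bound on $\mu_2$ are identical in substance; for $\mu_1$, the paper expands $\|\Po X + \Po(P-I)X\|_F^2$ and bounds the cross term via Cauchy--Schwarz, whereas you apply the reverse triangle inequality directly to $\|e^TX + e^TP\Pop X\|$---these are equivalent formulations of the same estimate.
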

A proof for this lemma will be presented in the Appendix.

\begin{proposition} \label{thm:E-factor}
Given $X \in \R^{n\times d}, P \in \R^{n\times n}, W \in \R^{d\times d}$ and $\alpha>0$, let $Y=X+\alpha PXW$.  Under Assumption~\ref{assumpt:0}, there holds
\begin{equation} \label{eq1:E(r)}
    \E[r(X,Y)] = 1+ \left(\frac{\alpha^2d\sigma^2}{1+\alpha^2d\sigma^2\mu_2^2}
    (\mu_1^2-\mu_2^2)-\E[\eta] \right)\Sim(X),
\end{equation}
where
\begin{equation}\label{def:eta}
\eta := \E [\xi_1] / \E [\xi_2] - \xi_1/\xi_2,
\end{equation}
\end{proposition}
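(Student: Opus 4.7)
The plan is to obtain Proposition~\ref{thm:E-factor} as a direct consequence of Proposition~\ref{prop:r(X,Y)} together with Lemma~\ref{lem:E(xi)}, using only linearity of expectation and a bit of algebraic rearrangement. There is no real technical obstacle here since the heavy lifting (computing $\E[\xi_i]$ in closed form, bounding $\mu_1, \mu_2$) has already been carried out in Lemma~\ref{lem:E(xi)}; the proposition amounts to repackaging those pieces.

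First I would start from the identity \eqref{eq:E[r]} derived earlier by taking expectations in Proposition~\ref{prop:r(X,Y)}:
\begin{equation*}
\E[r(X,Y)] = 1+\bigl(\E[\xi_1/\xi_2]-1\bigr)\Sim(X).
\end{equation*}
The only quantity that needs to be expressed in usable form is $\E[\xi_1/\xi_2]$. Since in general $\E[\xi_1/\xi_2]$ does not split across numerator and denominator, I would use the definition $\eta := \E[\xi_1]/\E[\xi_2] - \xi_1/\xi_2$ to write
\begin{equation*}
\xi_1/\xi_2 = \E[\xi_1]/\E[\xi_2] - \eta,
\end{equation*}
and then take expectations on both sides (noting that $\E[\xi_1]/\E[\xi_2]$ is a deterministic quantity) to obtain
\begin{equation*}
\E[\xi_1/\xi_2] = \E[\xi_1]/\E[\xi_2] - \E[\eta].
\end{equation*}
This step is essentially a bookkeeping trick that isolates the ``non-closed-form'' portion of $\E[\xi_1/\xi_2]$ into $\E[\eta]$, which is the object of the concentration argument outlined in the roadmap preceding the proposition.

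Next I would plug in the closed-form expressions from \eqref{E2:xi1-2}, namely $\E[\xi_i] = 1+\alpha^2 d\sigma^2 \mu_i^2$ for $i=1,2$, and compute
\begin{equation*}
\frac{\E[\xi_1]}{\E[\xi_2]} - 1
= \frac{(1+\alpha^2 d\sigma^2 \mu_1^2) - (1+\alpha^2 d\sigma^2 \mu_2^2)}{1+\alpha^2 d\sigma^2 \mu_2^2}
= \frac{\alpha^2 d\sigma^2}{1+\alpha^2 d\sigma^2 \mu_2^2}\bigl(\mu_1^2-\mu_2^2\bigr).
\end{equation*}
Combining this with the previous display gives $\E[\xi_1/\xi_2] - 1 = \frac{\alpha^2 d\sigma^2}{1+\alpha^2 d\sigma^2\mu_2^2}(\mu_1^2-\mu_2^2) - \E[\eta]$, and substituting back into \eqref{eq:E[r]} yields exactly \eqref{eq1:E(r)}. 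The entire argument is a one-paragraph calculation once Lemma~\ref{lem:E(xi)} is in hand, so if there is any ``hard part,'' it lives upstream in justifying that $\E[\eta]$ is small (i.e., the concentration of $\xi_1/\xi_2$ around $\E[\xi_1]/\E[\xi_2]$), which is explicitly deferred to the subsequent discussion and is not required for this proposition itself.
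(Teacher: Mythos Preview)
Your proposal is correct and follows exactly the paper's approach: the paper's one-line justification after the proposition states that \eqref{eq1:E(r)} ``follows from replacing $\E[\xi_1/\xi_2]$ by $\E[\xi_1]/\E[\xi_2]-\eta$ in \eqref{eq:E[r]}, and then substituting in the expressions for $\E[\xi_1]$ and $\E[\xi_2]$ given in \eqref{E2:xi1-2},'' which is precisely the calculation you carry out (with the minor clarification that one must take expectations so that $\eta$ becomes $\E[\eta]$). If anything, your write-up is more careful than the paper's sketch.
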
 
The equality \eqref{eq1:E(r)} follows from replacing $\E[\xi_1/\xi_2]$ by $\E[\xi_1]/\E[\xi_2]-\eta$ in \eqref{eq:E[r]}, and then substituting in the expressions for $\E[\xi_1]$ and $\E[\xi_2]$ given in \eqref{E2:xi1-2}.  

We will use some concentration inequalities to show that under suitable conditions the quantity  $\E[\eta]$ will be amply small so that TSE occurs with overwhelming probability. Our main theoretical result is given as the following theorem.  To state the result, we first note from \eqref{mu:bounds} that the following condition always holds
\begin{equation}\label{bnd:tech}
\max \left\{\mu_1^2-(1-\omega)^2,\delta^2-\mu_2^2\right\} \ge 0,
\end{equation}
where the left-hand side depends on $X$ and $P$.

\begin{theorem} \label{thm:r-rate}
Given $X \in \R^{n\times d}, P \in \R^{n\times n}, W \in \R^{d\times d}$ and $\alpha>0$, let $Y=X+\alpha PXW$. 
Assume that the elements of $W \in \R^{n\times d}$ are independent, mean-zero, sub-gaussian random variables 
with sub-gaussian norm equal to $\sigma = 1/\sqrt{d}$.  
Suppose that $\omega + \delta < 1$ and the left-hand side of \eqref{bnd:tech} is bounded away from zero for $d$ sufficiently large.  
Then the expected escalation rate satisfies
\begin{equation}  \label{E5:rate}
    \E[r(X,Y)] \geq 1+ \frac{\alpha^2}{1+\alpha^2\delta^2}\left((1-\omega)^2-\delta^2\right)\Sim(X)>1.
\end{equation}
\end{theorem}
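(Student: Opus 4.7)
The plan is to combine the exact identity in Proposition~\ref{thm:E-factor} with the deterministic bounds of Lemma~\ref{lem:E(xi)}, and to show that the stochastic correction $\E[\eta]$ is negligible via a concentration argument. Under the normalization $\sigma = 1/\sqrt{d}$ (so that $d\sigma^2 = 1$ up to a universal constant from the sub-gaussian norm convention), equation \eqref{eq1:E(r)} reads
\begin{equation*}
\E[r(X,Y)] = 1 + \left(\frac{\alpha^2(\mu_1^2 - \mu_2^2)}{1+\alpha^2\mu_2^2} - \E[\eta]\right)\Sim(X).
\end{equation*}
So the first step is to bound the deterministic main term from below, and the second step is to bound $|\E[\eta]|$ by something that can be absorbed.

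For the deterministic bound, the map $f(a,b) = (a-b)/(1+\alpha^2 b)$ is strictly increasing in $a$ and strictly decreasing in $b$ for $a,b\ge 0$ (a one-line partial-derivative check). Applying the inequalities $\mu_1^2 \ge (1-\omega)^2$ and $\mu_2^2 \le \delta^2$ from \eqref{mu:bounds}, together with the hypothesis $\omega + \delta < 1$ (so that $(1-\omega)^2 - \delta^2 = (1-\omega-\delta)(1-\omega+\delta) > 0$), immediately gives
\begin{equation*}
\frac{\alpha^2(\mu_1^2 - \mu_2^2)}{1+\alpha^2\mu_2^2} \ \geq\ \frac{\alpha^2\bigl((1-\omega)^2 - \delta^2\bigr)}{1+\alpha^2\delta^2} \ >\ 0.
\end{equation*}

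The main obstacle is controlling $\E[\eta] = \E[\xi_1]/\E[\xi_2] - \E[\xi_1/\xi_2]$. The plan is to apply a Hanson--Wright-type concentration inequality: each $\|\Pi_i Y\|_F^2 = \|\Pi_i X\|_F^2 + 2\alpha\langle\Pi_i X, \Pi_i PXW\rangle + \alpha^2\|\Pi_i PXW\|_F^2$ is a quadratic-plus-linear form in the sub-gaussian entries of $W$, so (exploiting the independence and the variance formula \eqref{W moments}) $\xi_i$ concentrates around its mean $1+\alpha^2\mu_i^2$ with relative deviation of order $d^{-1/2}$. Since $\E[\xi_2] \ge 1$ is uniformly bounded away from zero, a ratio-concentration argument --- splitting the probability space into the typical event where $\xi_2$ is within a constant factor of its mean, and the complement where $\xi_2$ could be small --- yields $\E[\xi_1/\xi_2] = \E[\xi_1]/\E[\xi_2] + o_d(1)$. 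The delicate subroutine is to prove that the rare event cannot make the ratio blow up in expectation; this is precisely where the \emph{sub-gaussian} assumption (rather than merely finite second moments) is essential, since its exponential tail decay makes the atypical contribution summable. The hypothesis that the left-hand side of \eqref{bnd:tech} be bounded away from zero for large $d$ then ensures the resulting $o_d(1)$ correction is dominated by the slack between $f(\mu_1^2,\mu_2^2)$ and its lower bound $f((1-\omega)^2,\delta^2)$ from the previous step.

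Combining the two steps yields
\begin{equation*}
\E[r(X,Y)] \ \geq\ 1 + \frac{\alpha^2\bigl((1-\omega)^2 - \delta^2\bigr)}{1+\alpha^2\delta^2}\Sim(X),
\end{equation*}
and the final strict inequality $>1$ in \eqref{E5:rate} follows from $\Sim(X) > 0$ (part of the standing assumption $\Sim(X)\in(0,1)$ under which $r(X,Y)$ is defined) together with $(1-\omega)^2 > \delta^2$.
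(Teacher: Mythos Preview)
Your proposal is correct and follows essentially the same route as the paper: start from the identity in Proposition~\ref{thm:E-factor}, use the monotonicity of $(a-b)/(1+\alpha^2 b)$ together with the bounds \eqref{mu:bounds} to produce the deterministic lower bound (this is exactly the paper's slack term $\Delta>0$), and then show via sub-gaussian concentration of $\xi_1,\xi_2$ that $\E[\eta]=o_d(1)$ is absorbed by that slack under the hypothesis on \eqref{bnd:tech}. The only cosmetic difference is that the paper packages the concentration step through a pointwise algebraic bound $|\eta|\ge t \Rightarrow \max_i|\xi_i-\E[\xi_i]|\ge \gamma t$ (Lemma~\ref{lem:concen-eta}) before invoking Hoeffding and quadratic-form concentration, whereas you sketch a direct ratio-concentration argument; your remark that the sub-gaussian tail is what keeps the rare-event contribution to $\E[\xi_1/\xi_2]$ integrable is a point the paper leaves implicit.
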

The proof for Theorem~\ref{thm:r-rate} will be again given in the Appendix.  

The most critical quantities in the bound~\eqref{E5:rate} are $\omega$ and $\delta$.  Whenever these two numbers are small, TSE happens; and the smaller they are, the faster is the escalation rate. The following corollary considers a few special situations for the attention matrix $P$, indicating that spectral properties, especially spectral gaps, of $P$ play a key role in the TSE process.  This  corollary can be directly verified from the definitions of the involved quantities.
\begin{corollary}
Let the conditions in Theorem~\ref{thm:r-rate} hold.
When $P$ is doubly stochastic, then $\mu_1= 1$ and $\omega = 0$.  In this case, for $\alpha=1$ the expected escalation rate estimate in \eqref{E5:rate} reduces to
\begin{equation*}\label{if1:P'e=e}
\E[r(X,Y)] \geq 1+ \frac{1-\delta^2}{1+\delta^2} \; \Sim(X).
\end{equation*}
Moreover, when $P$ is symmetric (thus doubly stochastic), 
then $\delta^2 =  |\lambda_2(P)|^2$ and
\begin{equation}\label{if2:P=P'}
\E[r(X,Y)] \geq 1+ \frac{1-|\lambda_2(P)|^2}{1+|\lambda_2(P)|^2}\; \Sim(X),
\end{equation}
where $\lambda_2(P)$ is the second largest eigenvalue of $P$ in modulus, and $1-|\lambda_2(P)|^2$ serves as a measure of the spectral gap of $P$.
\end{corollary}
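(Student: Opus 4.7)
The plan is to verify each claim by direct substitution into the definitions \eqref{def:mu's} and \eqref{def:omega-delta}, then plug the resulting simplifications into the master bound \eqref{E5:rate}. No new probabilistic or concentration arguments are needed beyond what Theorem~\ref{thm:r-rate} already supplies; everything reduces to elementary linear algebra.

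First, for the doubly stochastic case, I would use the two defining identities $P\One=\One$ and $\One^TP=\One^T$, equivalently $e^TP=e^T$ with $e=\One/\sqrt{n}$. Since $\Po=\One\One^T/n$, double stochasticity gives $\Po P X = \One(\One^T P)X/n = \One\One^T X/n = \Po X$, so $\|\Po PX\|_F=\|\Po X\|_F$ and hence $\mu_1=1$. For $\omega$, the identity $e^TP=e^T$ together with $e^T\Pop = e^T - e^T\Po = e^T - e^T = 0$ gives $e^TP\Pop = e^T\Pop = 0$, so $\omega=0$. Substituting $(1-\omega)^2=1$ and $\alpha=1$ into \eqref{E5:rate} and simplifying the arithmetic yields the first stated bound
\[
\E[r(X,Y)] \geq 1 + \frac{1-\delta^2}{1+\delta^2}\,\Sim(X).
\]

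Second, for the symmetric case, symmetry of $P$ combined with $P\One=\One$ automatically gives $\One^TP=\One^T$, so the first part applies and $\omega=0$, $\mu_1=1$. To evaluate $\delta=\|\Pop P\|_2$, I would diagonalize $P=\sum_{i=1}^n \lambda_i v_iv_i^T$ in an orthonormal eigenbasis with $v_1=e$ and $\lambda_1=1$ (the Perron eigenvalue, which is the largest in modulus since the spectral radius of a stochastic matrix equals one). Then $\Pop P = P - ee^TP = P - ee^T = \sum_{i\geq 2}\lambda_i v_iv_i^T$, whose spectral norm is $\max_{i\geq 2}|\lambda_i|=|\lambda_2(P)|$ under the descending-magnitude ordering fixed in the Notation section. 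Inserting $\delta^2=|\lambda_2(P)|^2$ into the bound just obtained produces \eqref{if2:P=P'}. There is no real obstacle in either part; the only point deserving a brief justification is that $\lambda_1=1$ corresponds precisely to the $\One$-direction that $\Pop$ annihilates, which is exactly what allows the rank-one update $-ee^T$ to be cleanly identified as the removed spectral component.
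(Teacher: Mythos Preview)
Your proposal is correct and follows exactly the approach the paper intends: the paper states only that ``this corollary can be directly verified from the definitions of the involved quantities,'' and your verification of $\mu_1=1$, $\omega=0$ via $e^TP=e^T$ (together with the spectral decomposition of the symmetric $P$ to identify $\delta=|\lambda_2(P)|$) is precisely that direct check. Nothing is missing.
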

As we will empirically demonstrate later, in practice formula~\eqref{if2:P=P'} turns out to be a better estimate for $\E[r(X,Y)]$ than the guaranteed lower bound in \eqref{E5:rate} which can be overly conservative.  It is worth noting that as $\Sim(X)$ becomes close to 1, the corresponding attention matrices $P(X)$ computed from the softmax formula \eqref{softmax} (and those from most other formulas as well) will be close to $\One\One^T/n$, which is symmetric with $\lambda_2(P)=0$.

Aside from $\Sim(X)$, the estimation formula in \eqref{if2:P=P'} is entirely determined by the spectral gap of $P$.  If $|\lambda_2(P)|=1$ (say, for $P=I$), then no escalation would happen.  But if $|\lambda_2(P)| \ll 1$, the escalation rate can be large, which in fact is what happens in reality for random-like row-stochastic matrices $P$ even though $P$ is asymmetric in general. This phenomenon not only happens for $P$ computed by the softmax formula \eqref{softmax}, but also for $P$ computed by other formulas as well.

\begin{remark}
Some comments are due for Theorem~\ref{thm:r-rate} concerning the acceleration of TSE.
\begin{itemize}
\item 
The term $\Sim(X)$ in \eqref{E5:rate} or \eqref{if2:P=P'} plays a role of accelerator for TSE.  That is, the larger it is, the faster is the escalation since the factor in front of $\Sim(X)$ does not vary significantly.  As $\Sim(X)$ goes to 1,  both $\omega$ and $\delta$ converge to 0 so that the estimated escalation rate on the right-hand side of \eqref{E5:rate} approaches its maximum at $1+ \alpha^2$.  
\item
If we consider the convergence of similarity to 1 (or diversity to 0), then the rate is linear and asymptotically $1/(1+ \alpha^2)=1/2$ for $\alpha=1$.  Experiments show that the rate 1/2 occurs quite early in practice, resulting in a fast linear convergence.
\end{itemize}
\end{remark}

Next, we show that the above TSE result can be straightforwardly extended to the case of multi-head self-attention.  For this purpose, it suffices to show that in the multi-head case, the expected escalation rate $\E[r(X,Y)]$ has an identical expression as in \eqref{eq1:E(r)} for the single-head case except that in the former case, relevant quantities are the average over multiple heads.  Now assume that the number of heads is $h$ and $d$ is divisible by $h$.

\begin{proposition}\label{prop:multihead}
Given $X \in \R^{n\times d}$, $P_k \in \R^{n\times n}$ and $W_k \in \R^{d \times d/h}$, for $k=1,\cdots,h$, and $\alpha>0$, let
\[
Y = X+ \alpha[P_1XW_1\;\; P_2XW_2\;\; \cdots \;\; P_hXW_h].
\]
Under Assumption~\ref{assumpt:0} (with column number $d$ for $W$ changed to $d/h$ for all $W_k$),
\begin{equation} \label{eq2:E(r)}
    \E[r(X,Y)] = 1+ \left(\frac{\alpha^2d\sigma^2}{1+\alpha^2d\sigma^2\bar{\mu}_2^2}
    ( \bar{\mu}_1^2- \bar{\mu}_2^2)-\E[\eta] \right)\Sim(X),
\end{equation}
where $\eta$ is defined as in \eqref{def:eta} and
\begin{equation*}
\bar{\mu}_i^2 := \frac{1}{h}\sum_{k=1}^h\frac{\|\Pi_i P_k X\|_F^2}{\|\Pi_i X\|_F^2}, \;\; i = 1,2.
\end{equation*}
\end{proposition}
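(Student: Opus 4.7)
The plan is to reuse the identity \eqref{eq1:E(r)} from Proposition~\ref{thm:E-factor} verbatim and only to recompute the two expected values $\E[\xi_1]$ and $\E[\xi_2]$ in the multi-head setting. Since \eqref{eq:E[r]} and the rewriting $\E[\xi_1/\xi_2]=\E[\xi_1]/\E[\xi_2]-\E[\eta]$ are purely algebraic in $Y$ and involve no structural information about how $Y$ is produced, once $\E[\xi_i]$ is shown to have the claimed form the desired \eqref{eq2:E(r)} follows by the same substitution that led to \eqref{eq1:E(r)}.

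To compute $\E[\xi_i]$ I would partition $X=[X^{(1)}\;\cdots\;X^{(h)}]$ into $h$ column blocks with $X^{(k)}\in\R^{n\times d/h}$, so that
\[
\Pi_i Y \;=\; \bigl[\,\Pi_i X^{(1)}+\alpha\,\Pi_i P_1 X W_1 \;\;\cdots\;\; \Pi_i X^{(h)}+\alpha\,\Pi_i P_h X W_h\,\bigr].
\]
Expanding $\|\Pi_i Y\|_F^2$ block-by-block produces the baseline $\sum_k\|\Pi_i X^{(k)}\|_F^2=\|\Pi_i X\|_F^2$, linear cross terms proportional to $\mathrm{tr}((\Pi_i X^{(k)})^T\Pi_i P_k X W_k)$ which vanish in expectation because $\E[W_k]=0$, and head-wise quadratic terms $\alpha^2\|\Pi_i P_k X W_k\|_F^2$. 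Crucially, no cross-head products appear, since distinct blocks occupy disjoint column ranges. For each head the moment identity $\E[W_k W_k^T]=(d/h)\sigma^2 I_d$, which is the restatement of \eqref{W moments} with column count $d$ replaced by $d/h$, gives $\E\,\|\Pi_i P_k X W_k\|_F^2=(d/h)\sigma^2\|\Pi_i P_k X\|_F^2$, exactly mirroring the step used in Lemma~\ref{lem:E(xi)}.

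Summing the head-wise contributions over $k=1,\ldots,h$ and dividing through by $\|\Pi_i X\|_F^2$, the prefactor $(d/h)$ combines with $\sum_k\|\Pi_i P_k X\|_F^2/\|\Pi_i X\|_F^2=h\bar{\mu}_i^2$ to recover $\alpha^2 d\sigma^2\bar{\mu}_i^2$, so that $\E[\xi_i]=1+\alpha^2 d\sigma^2\bar{\mu}_i^2$ is the multi-head analog of \eqref{E2:xi1-2}. Plugging this into \eqref{eq:E[r]} with $\eta$ interpreted via the multi-head $\xi_1,\xi_2$ yields \eqref{eq2:E(r)}. The only point requiring care, and essentially the sole way in which the multi-head case differs from the single-head one, is the bookkeeping of the column partition: one must verify that the randomness in $W_k$ couples only to the $k$-th block of $\Pi_i X$ so that all potential cross-head products are killed by independence before the second-moment identity is invoked. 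Once that is in place, the proof reduces to the calculation of Lemma~\ref{lem:E(xi)} executed $h$ times in parallel and then averaged.
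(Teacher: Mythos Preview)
Your proposal is correct and follows essentially the same approach as the paper. The paper likewise reduces the claim to a multi-head analog of Lemma~\ref{lem:E(xi)} establishing $\E[\xi_i]=1+\alpha^2 d\sigma^2\bar\mu_i^2$ via the same expand-and-apply-second-moment argument, and then invokes Proposition~\ref{thm:E-factor}; your explicit column partition of $X$ into blocks $X^{(k)}$ is only a cosmetic difference from the paper's slightly terser bookkeeping.
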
 
The proof for this proposition is given in the Appendix. 

Compared to \eqref{eq1:E(r)} in Proposition~\ref{thm:E-factor}, we see that the only difference in \eqref{eq2:E(r)} is that $\mu_i^2$ are replaced by $\bar{\mu}_i^2$ which are the average across multiple heads.  Additionally, the bounds in \eqref{mu:bounds} for ${\mu}_i^2$ also hold for their average counterparts $\bar{\mu}_i^2$ once we replace $(1-\omega)^2$ and $\delta^2$ by their corresponding average counterparts, that is, $\bar{\mu_1}^2 \ge \frac{1}{h}\sum_{k=1}^h(1-\omega_k)^2$ and $\bar{\mu_2}^2 \le \frac{1}{h}\sum_{k=1}^h \delta_k^2$ where quantities with $k$-indices are still defined as in \eqref{def:omega-delta} except now associated with different $P_k$'s.

Equipped with Proposition~\ref{prop:multihead} and following the same line of arguments, we can readily derive the counterpart of Theorem~\ref{thm:r-rate} and thus extend our TSE analysis from the single-head case to multiple-head cases.

\subsection{Other Steps Do Not Impact TSE} 

We first examine the FFN Step in Algorithm~\ref{alg:layer}.  We are not aware of any previous report that the classic FFN architecture has any involvement with TSE under any circumstances.   This is easy to explain from the following column-space viewpoint.  Essentially, TSE implies that the column space of $X$ is moving towards the subspace $\spanOne$. However, in FFN, the right-multiplications of $X$ by weight matrices do not change, in one way or another, the column space of $X$ at all.  For example, if $\Pop X$ is small relative to $\Po X$, then so should be $(\Pop X)W$ relative to $(\Po X)W$ for any generic $W$. Furthermore, no discernible reason is seen to expect any impact on TSE from the usual element-wise activation functions such as ReLU.  On the contrary, it is evident that the subspace $\spanOne$ is invariant under any element-wise activation function $\phi: \R\rightarrow\R$.  Therefore, such functions actually preserve similarity of $X$ and do not interfere with the TSE process.

Next we examine the impact of layer normalizations on TSE.  The layer normalization function $Y=\LN (X)\in\R^{n\times d}$ can be expressed (without scaling and shifting) as
\begin{equation*}
Y = DX\left(I-\One_d\One_d^T/d\right),
\end{equation*}
where $D$ is a diagonal matrix with $[D]_{ii} = 1/\sigma_i$, and $\sigma_i^2$ is the variance of the $i$-th row of $X$.  Clearly, if $X=\One v^T$, then its column space $\spanOne$ is invariant under layer normalizations since the $D$-matrix is a multiple of identity.  This indicates that layer normalizations do not de-escalate high token similarity. Moreover, if the elements of $X$ are independently random with a fixed variance, then the corresponding $D$-matrix will also be close to a multiple of identity, thus preserving the column space of $X$.
In either case, the TSE process is not interfered with one way or another by layer normalizations.

\subsection{Experimental Verification}

In this subsection, we provide some strong empirical evidence to corroborate our theoretical results.   Our experiments are carried out on multi-head, Vision Transformers\footnote{https://github.com/lucidrains/vit-pytorch} that follows the block architecture given in Algorithm 1.  We observe some key quantities at each block at the initial state where the input matrix $X\in\R^{n\times d}$ is randomly drawn from the standard normal distribution $\mathcal{N}(0, 1)$ and the weight matrices $W_k$ are randomly initialized from $\mathcal{N}\left(0,\frac{h}{d}I\right)$ (while other weight matrices are initialized using the default method in the code).  The model parameters are set to $n=64$, $d=512$, $h=8$, $\alpha=1$ and the depth is set to 20.  We always calculate attention matrices using the softmax formula \eqref{softmax}(though preliminary trials suggest that other formulas would essentially give the same results).  We mention that the above experimentation setting, with different depth values, will be again used in the next section.

In the first experiment, we run 50 independent random trials.  The average values of several important quantities are presented in Figure~\ref{fig:them-unif}.
\begin{figure}[htp]
\centering
\includegraphics[width=.8\textwidth,trim=0 0 0 0, clip]{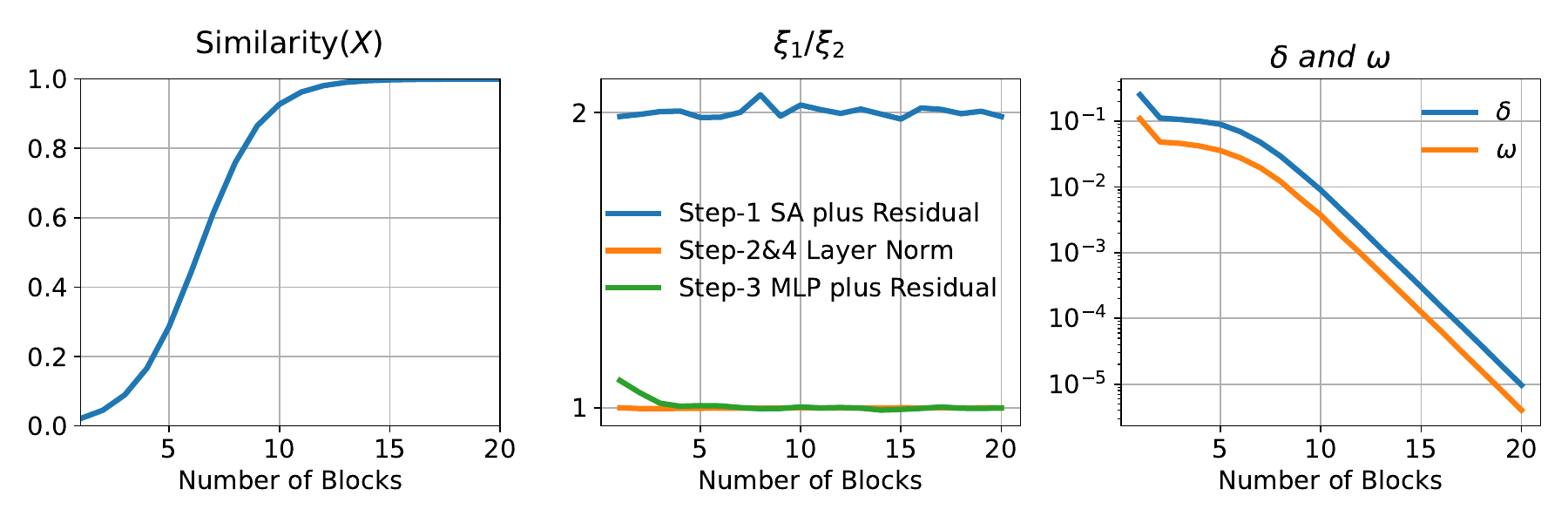}
\caption{Average values of token similarity, $\xi_1/\xi_2$, $\delta$ and $\omega$ over 50 trials.  
At each multi-head block, $\delta$ and $\omega$ are also averaged across the eight heads.
}
\label{fig:them-unif}
\end{figure}

From the left plot in Figure~\ref{fig:them-unif}, we observe that similarity monotonically increases throughout all blocks, approaching the maximum before block 15, due to the fact that in Step 1 the (sampled) mean of $\xi_1/\xi_2$ stays around 2 from the very beginning, while the other three steps have a negligible impact on token similarity (since $\xi_1/\xi_2\approx 1$), as can be seen in the middle plot.  Moreover, we see from the right plot that the quantities $\delta$ and $\omega$ are far less than 1 from the start and quickly approach 0 as the depth increases.

\begin{figure}[htp]
\centering 
\includegraphics[width=.8\textwidth,trim=0 0 0 0, clip]{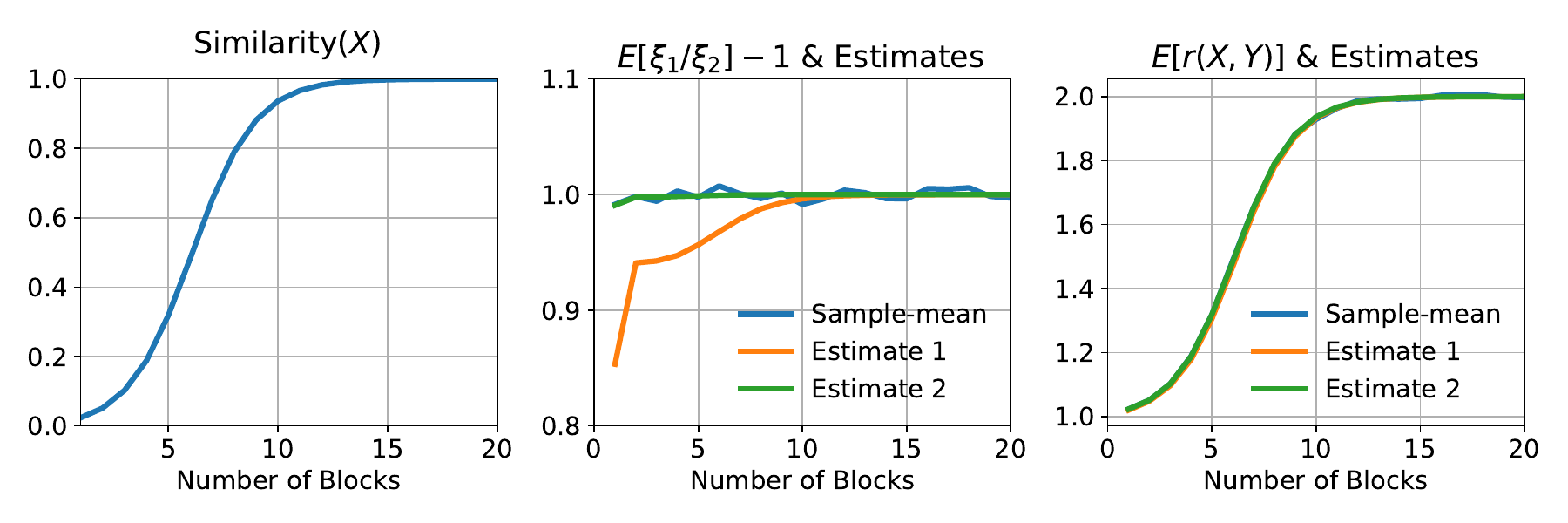}
\caption{Average values over 1000 trials of token similarity, $\xi_1/\xi_2-1$ and $r(X,Y)$ and 
their estimates from \eqref{E5:rate} and \eqref{if2:P=P'} for the latter two quantities
}
\label{fig:estimates} 
\end{figure}

In the second experiment, we only randomize the weight matrix $W$ in Step~1 and run 1000 trials, while fixing all other quantities including $X$.  This is exactly the same setting under which we derived expected values for relevant quantities in our analysis.   
In Figure~\ref{fig:estimates}, the left plot gives the average token similarity of $X$ which looks visibly identical to the one given in the left plot of Figure~\ref{fig:them-unif}.

In the middle plot of Figure~\ref{fig:them-unif}, we present the sample mean of $\xi_1/\xi_2-1$ and the two estimates given in \eqref{E5:rate} (with $\alpha=1$) and \eqref{if2:P=P'}.  Namely, Estimate~1 and 2 are, respectively, 
\[
\frac{(1 -\omega)^2-\delta^2}{1+\delta^2} \;\;\mbox{ and }\;\;
\frac{1-|\lambda_2(P)|^2}{1+|\lambda_2(P)|^2}.
\]
It is interesting to observe that although Estimate~2 is theoretically valid only for symmetric attention matrix $P$, in the experiment it actually provides a closer approximation to $\E[\xi_1/\xi_2]-1$ than the guaranteed lower bound in Estimate~1.  This experiment exemplifies the argument that large spectral gaps of attention matrices are one of the real driving forces behind TSE, considering that Estimate~2 only depends on $\lambda_2(P)$.

In the right plot of Figure~\ref{fig:them-unif}, we present the sample mean of $r(X,Y)$ and the corresponding two estimates.  As we can see, the three curves are nearly identical.  This is due to the fact that, in either \eqref{E5:rate} or \eqref{if2:P=P'}, the token similarity term dominates the factor in front of it which varies relatively mildly.

\subsection{Discussion}

Our analysis reveals that the driving force behind token similarity escalation is two-fold: 1) the existence of the invariant leading eigenspace, $\spanOne$, for all attention matrices which are stochastic (or Markov), and 2) large spectral gaps commonly present in computed attention matrices.  

Indeed, it has been established in \cite{bordenave2012circular} that, under mild conditions, $n$ by $n$ stochastic (or Markov) matrices generated from normalizations of i.i.d.~nonnegative random variables almost surely have large spectral gaps of the order $1-O(1/\sqrt{n})$ (see also the earlier work~\cite{chafai2010dirichlet}).  With randomly initialized weights in their construction, attention matrices are random (and stochastic) to also possess large spectral gaps with high probability, even though they may not necessarily or strictly satisfy all the required theoretical assumptions such i.i.d.~randomness.


Our similarity measure $\Sim(\cdot)$ converges to 1 at a global linear rate while it itself also helps accelerate the rate. The asymptotical rate of convergence reaches 1/2 when $\alpha=1$. In view of the fact that $0.5^{10} \le 10^{-3}$,  this fast linear convergence explains why, under standard initializations, classic Transformers start to show some instability once the number of layers exceeds ten or so.

\section{Mitigation of TSE in Transformers}

The analysis in the previous section underscores the issue of growing token similarity in post-norm Transformers. Many methods have been proposed to mitigate this problem, as discussed in our Related Work section.  Implicitly or explicitly, these methods invariably lead to reducing the role of self-attention relative to residual.  To take  a simplistic view, in the SA-plus-residual step $X + \alpha P(X)XW$ one could explicitly diminish the size of $\alpha$ to slow down the progress of TSE; or one could instead modify the step into $X + P(\hat{X})\hat{X}W$ where $\hat{X}$ is related to $X$ but has a smaller norm that $X$, thus implicitly reducing the role of self-attention.

\subsection{Implicit Mitigation in Pre-norm}
More recently, large-scale transformer models, such as those~\cite{touvron2023llama, brown2020language}, opt for the pre-norm architecture. 
A pre-norm transformer block can be written as $Z = \mathbf{PreLN}(X)$ where $Z$ is computed as follows:
\begin{equation}\label{def:pre-norm}
\hat{X} = \LN(X), \;\;\; Y= X + P(\hat{X})\hat{X}W, \;\;\; Z = Y + \phi(\LN(Y)W_1)W_2,
\end{equation}
From the above formulas with the usual random initializations, it is straightforward to verify that the expected value of $\|Y\|_F^2$ is larger than $\|X\|_F^2$, by following the same proof technique in the proof of Lemma~\ref{lem:E(xi)}.  Specifically, under the standard weight initializations, for the above pre-norm transformer block there hold
\begin{equation}\label{pre-norm-sizes}
    \E[\|Y\|_F^2] = \|X\|_F^2 + \| P(\hat{X})\hat{X} \|_F^2 > \|X\|_F^2, ~~
    \E[\|Z\|_F^2] > \|Y\|_F^2.
\end{equation}
According to \eqref{pre-norm-sizes}, the output norm of the pre-norm block, $\|\mathbf{PreLN}(\cdot)\|_F$, is monotonically increasing in expectation, as is illustrated empirically in the right plot of Figure~\ref{fig:pre}.  Meanwhile, the norm of the self-attention input is always fixed at $\|\LN(\cdot)\|_F$ due to layer normalizations.  Consequently, in the pre-norm architecture the role of the self-attention mechanism is progressively diminishing with as the depth grows.
\begin{figure}[htp]
\centering
\includegraphics[width=.5\textwidth,trim=0 0 0 0, clip]{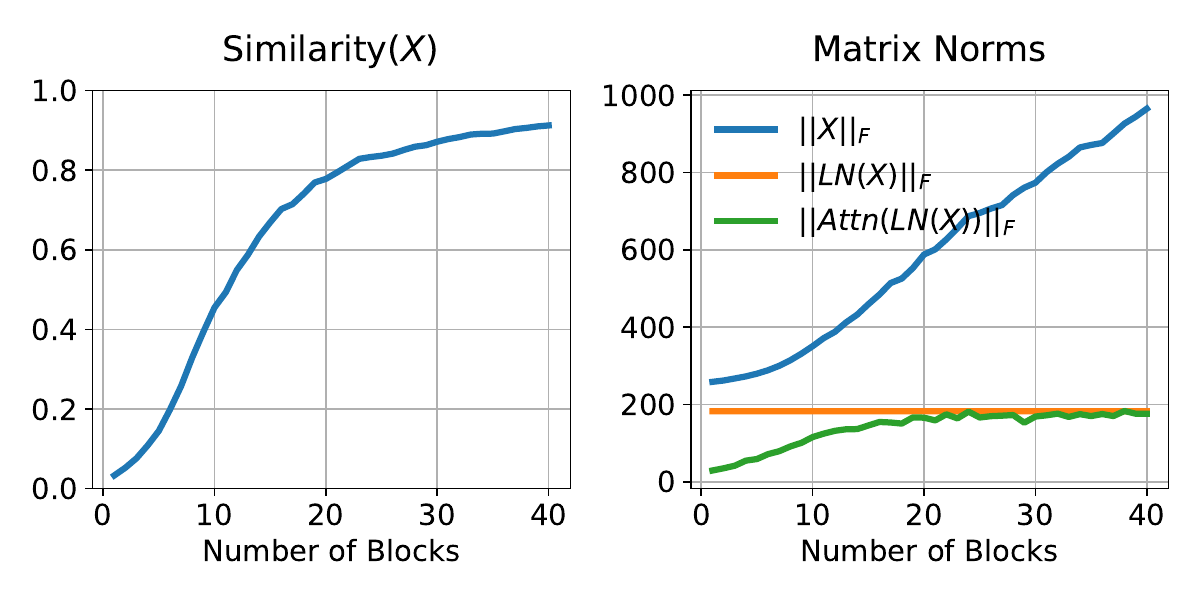}
\caption{
Left: Token similarity in a pre-norm transformer model. Right: Frobenius norms of input $X$~(blue), $\hat{X}$~(orange) and $P(\hat{X})\hat{X}W$~(green), see \eqref{def:pre-norm} for definitions, where attention matrices are computed by the softmax formula.}
\label{fig:pre}
\end{figure}

Nevertheless, as can be seen from the left plot of Figure ~\ref{fig:pre}, in the pre-norm model token similarity still keeps increasing, albeit at a much slower escalation rate than in post-norm models for the reason that we just explained.

\subsection{A Simple De-escalation Strategy}

We propose a simple strategy to counter the escalation of token similarity in deep Transformers, that is, to de-escalate the progressive growth in the subspace $ \textrm{span}\{\One e_j^T\}_{j=1}^d$.  Specifically, we will insert into Algorithm~\ref{alg:layer} a de-escalation step of the form
\begin{equation}\label{def:removal}
Y = \left(I - \tau\Po\right)X, \;\; \tau \in(0,1].
\end{equation}
Alternatively speaking, we first project $X$ onto  $\textrm{span}\{\One e_j^T\}_{j=1}^d$ by applying the projection to all columns of $X$, and then subtract a portion of the projection from $X$.  
The parameter $\tau\in(0,1]$ determines the removed portion. Particularly, $\tau=0$ and $\tau=1$ correspond to, respectively,  no de-escalation and a complete de-escalation.  
Additionally, when $\tau=1$ the operation becomes $Y=\Pop{X}$ which is equivalent to the centralization of the columns of $X$, i.e., subtracting the column mean from each column.
In principle, $\tau$ can be made a learnable parameter of the model, but the experimental results  reported here are all obtained using $\tau=1$.
There are several possible locations in Algorithm~\ref{alg:layer} to insert the proposed de-escalation step \eqref{def:removal}.  It remains to be determined whether different locations can make meaningful differences in the performance of large-scale models.  

In Figure~\ref{fig:spectra-ours}, we examine how the de-escalation operation, applied to the output of each layer (or the input of the next layer), affect the dynamics of token similarity in a base-line Transformer.  We observe that the levels of token diversity (one minus similarity) corresponding to 3 $\tau$-values,  $\tau \in \{ 0.1, 0.5, 1\}$, are around 0, 0.5 and 1, respectively.   Such a behavior seems quite predictable.  In particular, the phenomenon of TSE seems to have already been eliminated in the case of $\tau = 0.5$ (let alone the case of $\tau=1$).
\begin{figure}[ht]
\begin{center}
    \vspace{-.1cm}		
    \includegraphics[width=.5\textwidth,trim=0 0 0 0, clip]{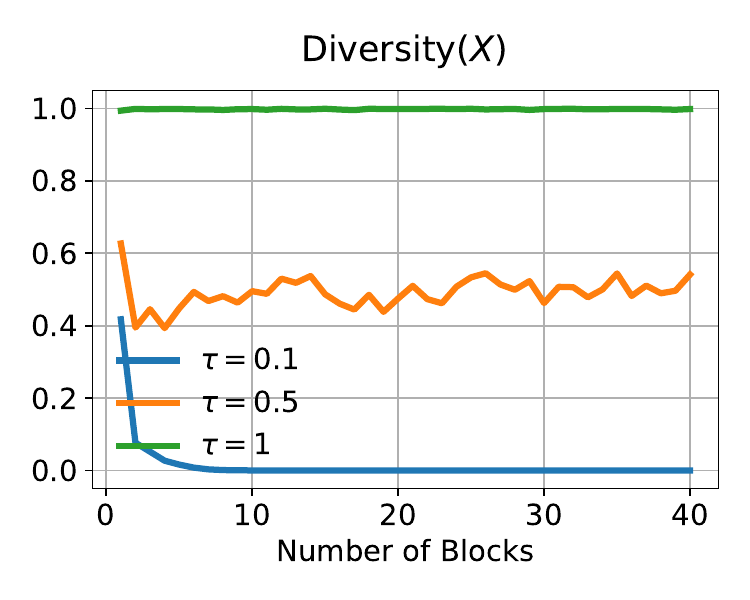}
    \vspace{-.1cm}
    \caption{
    The averaged values of 20 runs about Token Diversity in  De-escalated Transformers with softmax attention, where each block is the same as Algorithm 1. There are three de-escalation values: $\tau$ = 0.1, 0.5 and 1. }
    \label{fig:spectra-ours}
    \end{center}
\end{figure}


In this following, we conduct preliminary numerical experiments to assess how the proposed de-escalation strategy \eqref{def:removal} influences the training behavior of deep, post-norm Transformers in comparison to their pre-norm counterparts.  As such, on each test instance, we always compare three model variants: post-norm model, pre-norm model, and ours where the de-escalation \eqref{def:removal} is added within each Transformer block with $\tau=1$ fixed. 
It should be cautioned that an ``optimal value" of $\tau$ may vary across different models and cases, thus calling for case-to-case investigations.  In the presented experiments, we do not fine-tune $\tau$-values as our primary purpose here is to demonstrate how the proposed scheme can significantly and positively impact the training of some deep transformer models.

Two relatively small datasets, CIFAR10~\cite{krizhevsky2009learning} for image classification and WikiText-103~\cite{zhai2023stabilizing} for natural language processing, will be used for our experiments due to constraints on available computing resources.
All the experiments are carried out using PyTorch~\cite{paszke2019pytorch} running on one Nvidia-V100 GPU. We emphasize that for each test instance we always run multiple trials, starting from different random initializations for model weights.  All the reported values are the average of at least 3 trials. 

\subsection{Vision Transformer on CIFAR10}

We apply the Vision Transformer model ViT~\cite{dosovitskiy2020image} to the CIFAR10 dataset with the prescribed three model variants.  In this test, the de-escalation step \eqref{def:removal} is added to the end of each post-norm block. We utilize the widely used optimizer
AdamW~\cite{loshchilov2018decoupled} 
in which the hyper-parameters $\beta_1$ and $\beta_2$ are set to 0.9 and 0.999, respectively, along with a weight decay value of 0.1. A multi-step scheduler is employed with  reduction factor of $1/5$ at the 70\% and 90\% junctures of the training duration.  Image patches are configured to be 4 by 4, and the gradient-sampling batch size is set to 128.  Auto-augmentation~\cite{cubuk2020randaugment} is enabled for the dataset. Further details about this ViT model's configurations are given in Table~\ref{tab:settingsinvit}. With the depth 80, this tested model is qualified to be a deep transformer.   
\begin{table}[ht]
    \centering
    \caption{Model size parameters for ViT}
    \begin{tabular}{cccccc}
    \hline
        Depth & Hidden size & FFN size & Heads & Head size&  \\ \hline
       80 & 192& 384 & 8 & 24  \\ \hline
    \end{tabular}
    \label{tab:settingsinvit}
\end{table}

The above settings are applied to all three model variants.  There does exist a difference in the choice of learning rate $lr$.  We used $lr = 10^{-4}$ for training the pre-norm and our models, while a smaller $lr = 0.5*\!10^{-4}$ was used for training the post-norm model in order to obtain a meaningful reduction in the loss value.

\begin{figure}[ht]
	\begin{center}
		\vspace{-.1cm}		
        \includegraphics[width=.8\textwidth,trim=0 0 0 0, clip]{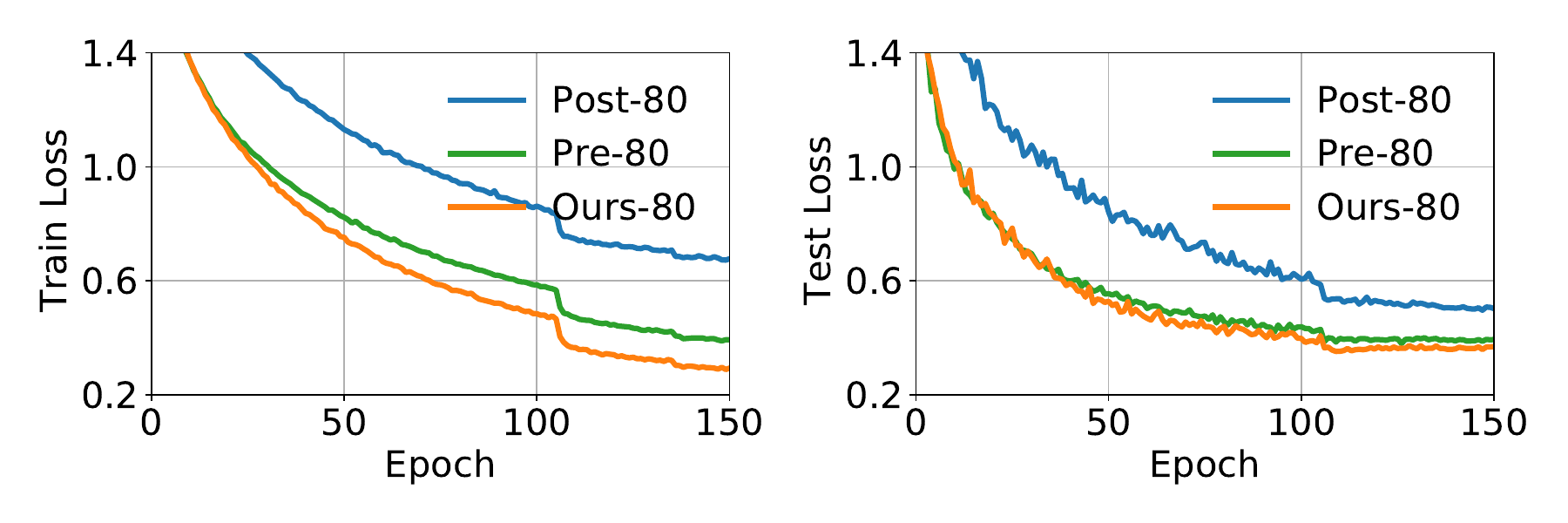}
		\vspace{-.4cm}
		\caption{Histories of training and testing losses of three ViT model variants on the CIFAR10 dataset with auto-augmentation.}
		\label{fig:lossvsepoch}
	\end{center}
	\vspace{-.1cm}
\end{figure}

Training histories, over the course of 150 epochs, of the three ViT model variants on CIFAR10 are presented in Figure~\ref{fig:lossvsepoch}. From this figure, we can see a significant performance gap between the pure post-norm model and our de-escalated post-norm model.  In this particular case, our de-escalated post-norm model also slightly outperformed its pre-norm counterpart in terms of training loss.  Overall, the effectiveness of our de-escalation strategy is unmistakable in this test.

\subsection{Transformer-XL on WikiText-103}
We next employ the Transformer-XL model~\cite{dai2019transformer} to perform experiments on  WikiText-103 dataset.  In order to create a deep transformer model of a manageable size, we made the following modifications: increasing the model depth from 16 to 60 and decreasing the FFN hidden width (i.e., the column number of $W_1$ and $W_2^T$ in Step~3 of Algorithm~1) from 2100 to 820.  The resulting deep model has about 201M model parameters which is moderately larger than the original size of 151M.  To train this larger model, we decrease the batch size from 60 to 40.  In addition, we disable the dropout and gradient clipping options to have a more generic optimization process. 
Beside the afore-mentioned modifications, we keep all the model hyper-parameters and optimization settings intact exactly as specified in \cite{dai2019transformer}, which we refer to for further details.

\begin{figure}[ht]
	\begin{center}
		\vspace{-.1cm}		
        \includegraphics[width=.4\textwidth,trim=0 0 0 0, clip]{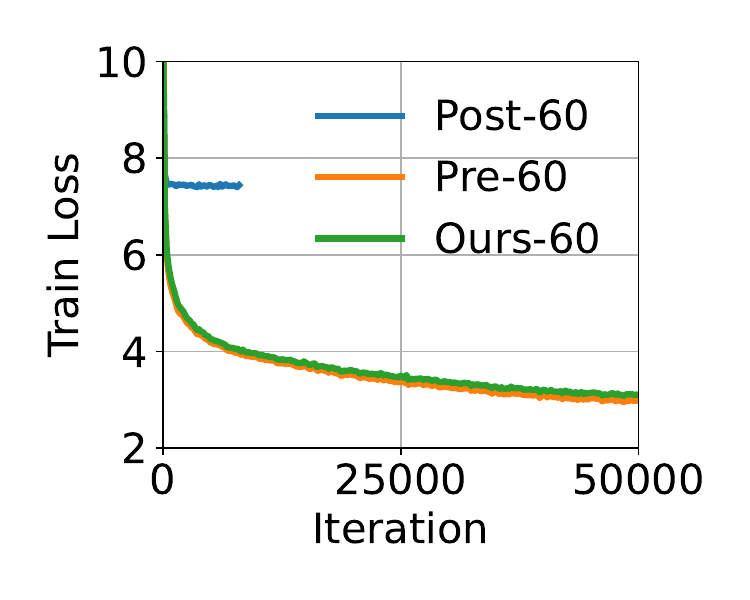}
		\vspace{-.5cm}
		\caption{Training loss histories of 3 Transformer-XL variants on 
		WiKiText103 dataset over 50000 iterations.}
		\label{fig:wt103}
	\end{center}
	\vspace{-.1cm}
\end{figure}

Figure~\ref{fig:wt103} gives the training performances of three Transformer-XL model variants of depth 60:  post-norm, pre-norm, and our de-escalated post-norm model in which we add de-escalation operation \eqref{def:removal} to the input of each FFN block in the post-norm model.  As we can see, the results are consistent with the previous experiment.  There is a huge performance gap between the post-norm and our de-escalated post-norm models (the former essentially failed, thus was cut short); while the pre-norm model and our de-escalated post-norm model performed similarly.

\subsection{Discussion}
Our proposed framework offers flexibility in placing the de-escalation operation at different locations within the Transformer block, determining the value of de-escalation strength $\tau$, and deciding whether de-escalation should be learnable or not.  During our experimentation, we did try a limited number of combinations for these settings and found that our proposed de-escalation strategy is generally effective in addressing TSE-related issues.  That said, real-world tasks are bound to be more intricate than our preliminary experiments, for which adjustments and calibrations of all available settings are most likely needed to deal with different problem scenarios and characteristics.

\section{Conclusion}
We conduct a comprehensive analysis on the phenomenon of token similarity escalation (TSE) in classic Transformers which leads to loss of expressive power in deep models.  Our theory reveals why and how TSE phenomenon occurs and what is the speed of escalation.  Based on insights gained from our analysis, we propose a simple and linear de-escalation operation to surgically remove excessive similarity from token representations without necessarily suppressing the role of self-attention.  Preliminary experiments show that, on moderate-scale transformers, the proposed de-escalation strategy effectively enables deep post-norm models to become competitive with their pre-norm counterparts.   The potential of the proposed strategy in large language models remains to be assessed.

\bigskip
{\bf Acknowlegments:} The second author would like to thank his colleagues Prof.~Tiefeng Jiang and Prof.~Jeff Yao for stimulating and useful discussions.

\bibliographystyle{plain}
\bibliography{ref}

\begin{appendix}

\section{Proofs of Results in Section 2}

\subsection{Proof of Lemma~\ref{lem:E(xi)}}

\begin{proof}
Let ``$\langle\cdot,\cdot\rangle$" denote the usual matrix inner product. For any given $Q \in \R^{n\times n}$, 
\begin{equation*}
     \|Q(X+ \alpha PXW)\|_F^2 
    =\|QX\|_F^2 + 2\alpha\langle QX, QPXW\rangle + \alpha^2 \|QPXW\|_F^2.
\end{equation*}
Under Assumption~\ref{assumpt:0} the expected value of the term linear in $W$ vanishes. Hence, 
\begin{equation*}
\E \left[ \|Q(X+\alpha PXW)\|_F^2 \right] = \|QX\|_F^2 + \alpha^2 \E[ \|QPXW\|_F^2] 
=  \|QX\|_F^2 +\alpha^2 d \sigma^2\|QPX\|_F^2,
\end{equation*}
in view of $\E[WW^T] = d\sigma^2 I$.  Hence, the expressions for $\E[\xi_1]$ and $\E[\xi_2]$ follow from substituting the matrix $Q$ by ${\Po}/{\|\Po X \|_F}$ and ${\Pop}/{\|\Pop X\|_F}$, respectively. We note that $ee^TPX=ee^TX$ if either $e^TP=e^T$ or $X=eu^T$. 

Next, to derive the lower bound for $\mu_1$.  Rewriting $\Po PX = \Po X + \Po (P-I) X$, we calculate
\begin{eqnarray*}
\mu_1^2 = \frac{\|\Po X + \Po (P-I) X\|_F^2}{\|\Po X\|_F^2} 
= 1 + \frac{2\langle\Po X,\Po(P-I)X\rangle}{\|\Po X\|_F^2} + \frac{\|\Po(P-I)X\|_F^2}{\|\Po X\|_F^2}.
\end{eqnarray*}
Since $\|\Po X\|_F = \|e^T X\|$ and $\|\Po(P-I)X\|_F = \|e^TP\Pop X\|$, we recognize that the third term is $\omega^2$.  By applying Cauchy-Schwartz inequality to the middle term, we arrive at
\[
\mu_1^2 \geq 1-2\omega+ \omega^2  = (1-\omega)^2.
\]
To obtain the upper bound of $\mu_2$ by $\delta$, we observe the equality
$$\Pop P=\Pop(P-ee^T) = \Pop P\Pop$$ which leads to
\begin{equation}\label{upper:mu2}
     \mu_2 := \frac{ \|\Pop P\Pop X\|_F}{\|\Pop X\|_F} \leq \|\Pop P\|_2 	= \delta
\end{equation}
after we invoke the inequality $\|AB\|_F \leq \|A\|_2\|B\|_F$. 
\end{proof}

\subsection{Technical Lemmas}

\begin{lemma} \label{lem:concen-eta} 
Let $\eta$ be defined as in \eqref{def:eta} and $\xi_i$, $i=1,2$, be defined as in \eqref{def:xi's} for $Y=X+\alpha PXW$.  For all $t \in [0,1]$, there holds
\begin{equation}\label{Prob-eta}
\Prob\left(|\eta| \ge t \right)    
 \leq \Prob\left(\max_{i=1,2}|\xi_i-\E[\xi_i]| \geq \gamma t \right),
\end{equation}
where
\begin{equation}\label{def:gamma}
\gamma := \frac{\E[\xi_2]^2}{\E[\xi_1] + 2\E[\xi_2]}
= \frac{(1+ \alpha^2 d\sigma^2\mu_2^2)^2}{3+\alpha^2 d\sigma^2(\mu_1^2 +2\mu_2^2)}.
\end{equation}
In particular, $\gamma \ge 1/4$ when $P$ is doubly stochastic or $X=eu^T$ for some $u\in\R^d$, and $\alpha^2 = d\sigma^2=1$.
\end{lemma}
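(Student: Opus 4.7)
The plan is to reduce the probabilistic statement to a purely deterministic, pointwise bound: I will show that the event $\{|\eta|\ge t\}$ is contained in $\{\max_i|\xi_i-\E[\xi_i]|\ge \gamma t\}$, after which the probability inequality is immediate. By the contrapositive, it suffices to prove that if $|\xi_i - \E[\xi_i]| < \gamma t$ for both $i=1,2$, then $|\eta| < t$.

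To this end, I would set $a = \E[\xi_1]$, $b = \E[\xi_2]$, and write $\xi_i = \E[\xi_i] + \Delta_i$. A direct common-denominator calculation gives the closed form
\begin{equation*}
\eta \;=\; \frac{a}{b} - \frac{a+\Delta_1}{b+\Delta_2}
      \;=\; \frac{a\,\Delta_2 - b\,\Delta_1}{b\,(b+\Delta_2)}.
\end{equation*}
Assuming $\max_i|\Delta_i| \le M$ with $M < b$, the triangle inequality bounds the numerator by $(a+b)M$ and the denominator is at least $b(b-M)$, so $|\eta| \le (a+b)M / (b(b-M))$. Setting $M=\gamma t$ and requiring this to be $\le t$ for all $t\in[0,1]$ reduces to the worst case $t=1$, which gives the condition $(a+2b)\gamma \le b^2$, i.e.\ $\gamma \le b^2/(a+2b)$. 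With equality this matches the definition of $\gamma$ in \eqref{def:gamma}, after substituting the explicit expressions $\E[\xi_i] = 1 + \alpha^2 d\sigma^2 \mu_i^2$ from Lemma~\ref{lem:E(xi)}.

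For the particular claim that $\gamma\ge 1/4$, I will use that if $P$ is doubly stochastic then $e^TP = e^T$, so $\Po P = \Po$ and hence $\mu_1 = 1$; the same $\mu_1 = 1$ holds in the degenerate case $X = eu^T$ since then $PX = X$. Plugging $\mu_1 = 1$ and $\alpha^2 = d\sigma^2 = 1$ into \eqref{def:gamma} gives $\gamma = (1+\mu_2^2)^2/(2(2+\mu_2^2))$, and with $x = \mu_2^2 \ge 0$ the inequality $\gamma \ge 1/4$ is equivalent to $2(1+x)^2 \ge 2+x$, i.e.\ $2x^2+3x \ge 0$, which is trivially true (with equality precisely at $x=0$).

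There is no serious obstacle; the only point that requires a small amount of care is the denominator estimate $|b+\Delta_2|\ge b - M$, which requires $M < b$. Since $b = \E[\xi_2] \ge 1$ and we only consider $t \in [0,1]$ with $\gamma \le b^2/(a+2b) \le b/2 < b$, we indeed have $\gamma t < b$ throughout, so the bound is valid. The tightness of the derived $\gamma$, which arises naturally from saturating the inequality at $t=1$, confirms that no sharper constant can be obtained by this argument.
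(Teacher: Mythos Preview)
Your proposal is correct and follows essentially the same approach as the paper: both arguments rewrite $\eta$ via a common denominator as $(a\Delta_2 - b\Delta_1)/(b(b+\Delta_2))$, bound it by $(a+b)M/(b(b-M))$ with $M=\max_i|\Delta_i|$, and arrive at the same defining inequality $(a+2b)\gamma\le b^2$ by taking the worst case $t=1$. The only cosmetic differences are that the paper argues the forward implication directly (rather than the contrapositive) and obtains $\gamma\ge 1/4$ by noting monotonicity of $\gamma$ in $\mu_2^2$ and evaluating at $\mu_2=0$, whereas you verify the equivalent polynomial inequality $2x^2+3x\ge 0$; both routes are equally valid.
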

\begin{proof}
By definition, 
\begin{equation*}
\eta = 
\frac{\E[\xi_1]}{\E[\xi_2]} - \frac{\xi_1}{\xi_2} =
\frac{\E[\xi_1] \xi_2-\xi_1 \E[\xi_2]}{\E[\xi_2] \xi_2} =
\frac{\E[\xi_1] (\xi_2 - \E[\xi_2])  - \E[\xi_2](\xi_1-\E[\xi_1])}{((\xi_2-\E[\xi_2])+\E[\xi_2]) \E[\xi_2]}.
\end{equation*}
For any $t$, $|\eta| \ge t$ implies
\begin{equation*}
t \le \frac{(\E[\xi_1] + t \E[\xi_2])|\xi_2 - \E[\xi_2]|  + \E[\xi_2]|\xi_1-\E[\xi_1]|}{\E[\xi_2]^2},
\end{equation*}
which in turn implies that for $t \in [0,1]$,
\begin{equation*}
t \le \frac{\E[\xi_1]+2\E[\xi_2]}{\E[\xi_2]^2}\max_{i=1,2}|\xi_i - \E[\xi_i]|
= \frac{1}{\gamma}\max_{i=1,2}|\xi_i - \E[\xi_i]|,
\end{equation*}
which proves \eqref{Prob-eta} with $\gamma$ defined in \eqref{def:gamma}, while the second expression in \eqref{def:gamma} follows from \eqref{E2:xi1-2}.

Finally, we know that when $P$ is doubly stochastic or $X=eu^T$, then $\mu_1=1$.  Since $\gamma$ increases monotonically with $\mu_2^2$, it attains its minimum at $\mu_2 = 0$ which gives the minimum value $1/4$. 
\end{proof}

Now we need to estimate the concentration of $\xi_1$ and $\xi_2$, both being of the form $\|A+BW\|_F^2$, that is, sum of squares of a linear transformation of a random matrix $W$ (which can also be viewed as a vector $w$).  We will invoke the following two concentration results.

\begin{lemma}\label{lem:concen-aw}
(General Hoeffding’s inequality)
Let $w \in \R^q$ be a random vector whose elements are independent, mean-zero, sub-gaussian random variables with sub-gaussian norm $K$. Then for any $a\in\R^{q}$ and $\epsilon \ge 0$ there holds
\begin{equation}
\Prob \left(2 \langle a,w\rangle \geq \epsilon\|a\| \right) 
\leq \exp  \left( - c\frac{\epsilon^2}{K^2}\right),
\end{equation}
where $c$ is an absolute constant. 
\end{lemma}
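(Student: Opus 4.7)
The plan is to reduce the bound to a one-dimensional sub-gaussian tail estimate on the scalar $S := \langle a,w\rangle = \sum_{i=1}^{q} a_i w_i$, which I observe is a sum of independent, mean-zero, sub-gaussian random variables with $\|a_i w_i\|_{\psi_2} = |a_i|\,\|w_i\|_{\psi_2} \le |a_i|\,K$.

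First I would invoke the standard closure property for sums of independent mean-zero sub-gaussian random variables (see, e.g., Vershynin, \emph{High-Dimensional Probability}, Proposition~2.6.1): there is an absolute constant $C_1$ with
\begin{equation*}
\|S\|_{\psi_2}^2 \;\le\; C_1 \sum_{i=1}^{q} \|a_i w_i\|_{\psi_2}^2 \;\le\; C_1 K^2 \|a\|^2.
\end{equation*}
Next I would apply the equivalent tail characterization of sub-gaussianity: $\Prob(S \ge u) \le \exp(-c_0 u^2 / \|S\|_{\psi_2}^2)$ for every $u\ge 0$ and an absolute constant $c_0$. Taking $u = \epsilon\|a\|/2$ and combining the two bounds yields
\begin{equation*}
\Prob\bigl(2\langle a,w\rangle \ge \epsilon\|a\|\bigr) \;=\; \Prob\bigl(S \ge \tfrac{\epsilon}{2}\|a\|\bigr) \;\le\; \exp\!\left(-\frac{c_0\,\epsilon^2}{4 C_1 K^2}\right) \;=\; \exp\!\left(-c\,\frac{\epsilon^2}{K^2}\right),
\end{equation*}
with $c := c_0/(4 C_1)$, which is the required inequality.

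No real obstacle is expected here: the lemma is a classical one-dimensional concentration estimate, and essentially all that is at stake is bookkeeping of absolute constants, which collapse into the single $c$. The prefactor $2$ in front of $\langle a,w\rangle$ is cosmetic in this respect and is presumably placed for convenient downstream use in Lemma~\ref{lem:concen-eta}. The one subtlety worth flagging is that several equivalent-up-to-constants definitions of the sub-gaussian norm circulate in the literature; fixing one convention (e.g.\ the Orlicz $\psi_2$ norm) at the outset suffices, since any other choice only redefines the absolute constant $c$.
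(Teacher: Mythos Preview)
Your proof is correct and is precisely the standard argument behind this inequality. The paper does not actually prove this lemma but simply cites it as Theorem~2.6.3 in Vershynin's \emph{High-Dimensional Probability}; your sketch is essentially that proof, so there is nothing to add.
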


\begin{lemma}\label{lem:concen-Bw}
(Concentration of random vectors)
Let $w \in \R^q$ be a random vector whose elements are independent, mean-zero, sub-gaussian random variables with sub-gaussian norm $K \le 1$. Then for any given $B\in\R^{p\times q}$ and any $\epsilon\in(0,1)$ there holds
\begin{equation}
\Prob \left(\|Bw\|^2 - \|B\|_F^2 \geq \epsilon\|B\|_F^2 \right) 
\leq \exp \left( - c\frac{\epsilon^2}{K^2}\right),
\end{equation}
where $c$ is an absolute constant.
\end{lemma}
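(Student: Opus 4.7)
The plan is to write $\|Bw\|^2 = w^T(B^T B)w$ and treat it as a centered quadratic form in a vector of independent sub-gaussian coordinates, then invoke a Hanson--Wright-type concentration inequality. Setting $A := B^T B$, we have $A \succeq 0$ and $\E[\|Bw\|^2] = \mathrm{tr}(A) = \|B\|_F^2$ (under the normalization implicit in the sub-gaussian hypothesis, with any constant-factor slack absorbed into $c$), so the event $\|Bw\|^2 - \|B\|_F^2 \ge \epsilon\|B\|_F^2$ is a one-sided deviation of the quadratic form from its mean.

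First I would diagonalize $A = U\Lambda U^T$ and change variables to $\tilde w = U^T w$, whose coordinates remain independent mean-zero sub-gaussian with norm at most $K$ by rotational invariance (or, without loss of generality, by using the orthogonally invariant form of Hanson--Wright). This reduces $\|Bw\|^2$ to a weighted sum $\sum_k \lambda_k \tilde w_k^2$ of squares of independent sub-gaussian variables, where $\lambda_k \ge 0$ are the eigenvalues of $A$. Each $\tilde w_k^2 - \E[\tilde w_k^2]$ is then centered sub-exponential with Orlicz norm $O(K^2)$, so a Bernstein inequality for sums of independent sub-exponentials yields a two-regime bound of the form
\[
\exp\!\Bigl(-c\,\min\bigl\{\epsilon^2\|B\|_F^4/(K^4\|A\|_F^2),\; \epsilon\|B\|_F^2/(K^2\|A\|_2)\bigr\}\Bigr).
\]
Using the elementary inequalities $\|A\|_F \le \|B\|_F^2$ and $\|A\|_2 = \|B\|_2^2 \le \|B\|_F^2$, both branches in the minimum are bounded below by $c\epsilon^2/K^2$, which produces the stated exponent.

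The main obstacle is the bookkeeping between these two Bernstein regimes and the single-term exponent claimed in the lemma. Since $\epsilon \in (0,1)$ and $K \le 1$, the sub-gaussian (quadratic-in-$\epsilon$) branch is the binding one, so the argument goes through after possibly redefining the absolute constant $c$; the linear-in-$\epsilon$ branch is only relevant in a regime the lemma's hypothesis $\epsilon < 1$ excludes. A cleaner alternative, if one prefers to bypass the diagonalization and Bernstein step, is to cite the Hanson--Wright inequality \emph{verbatim} (e.g., from Vershynin's textbook) and reduce the entire proof to verifying the two norm comparisons $\|A\|_F, \|A\|_2 \le \|B\|_F^2$ and plugging in $t = \epsilon\|B\|_F^2$.
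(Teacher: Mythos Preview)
The paper does not prove this lemma at all; it simply records in a remark that it is adopted from Vershynin's textbook, specifically the proof of Theorem~6.3.2 there. That proof proceeds exactly via Hanson--Wright applied to $A=B^TB$, followed by the norm comparisons $\|A\|_F \le \|B\|_F^2$ and $\|A\|_2 \le \|B\|_F^2$. So your ``cleaner alternative'' at the end---cite Hanson--Wright and verify these two inequalities---is precisely the paper's (cited) approach, and you correctly identify why the restriction $\epsilon\in(0,1)$ collapses the two-regime Bernstein bound to the single sub-gaussian exponent.

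One caution about your first route: after rotating by $U^T$, the coordinates of $\tilde w = U^T w$ are in general no longer independent, so you cannot directly feed $\sum_k \lambda_k \tilde w_k^2$ into a Bernstein inequality for sums of \emph{independent} sub-exponentials. Your parenthetical hedge (``using the orthogonally invariant form of Hanson--Wright'') is the right fix, but at that point you are back to invoking Hanson--Wright for a general symmetric $A$, so the diagonalization buys nothing. Drop that detour and go straight to the black-box citation; that matches the paper and avoids the independence gap.
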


Combining the above two results, we readily deduce that for some absolute constant $c$,
\begin{equation}\label{Prob:|a+Bw|}
\Prob\left(\|Bw+a\|^2 - \E\|Bw+a\|^2
\ge \epsilon(\|B\|_F^2+\|B^Ta\|)\right)
\leq 2\exp\left( - c\frac{\epsilon^2}{K^2}\right).
\end{equation}

\begin{remark}
A few remarks are in order. 
\begin{itemize}
\item
It should be noted that in these results the absolute constant $c$ is generic in the sense that it can possibly have different values in different contexts.  In doing so we avoid using multiple symbols for the sake of simplicity.
\item
Lemmas~\ref{lem:concen-aw} and \ref{lem:concen-Bw} are adopted from \cite{Vershynin2018}, see Theorem 2.6.3 and the proof of Theorem 6.3.2, respectively.  These concentration inequalities are written in a form so that the right-hand sides are dimension-free.
\item 
With regard to the values of sub-gaussian norm $K$ (also called sub-gaussian parameter) in the above lemmas,
it is known (see Example 2.5.8 in \cite{Vershynin2018}) that one can take
$K = \sigma$ for $w \sim N(0,\sigma^2I)$ and
$K = \|w\|_\infty$ if $w$ is bounded while additional absolute constants, if any, can be absorbed into $c$. 
\end{itemize}
\end{remark}

\begin{lemma} \label{lem:concen-xi}
Let $\xi_i$, $i=1,2$, be defined as in Proposition~\ref{prop:r(X,Y)} for $Y=X+\alpha PXW$. Assume that the elements of $W$ are independent, mean-zero, sub-gaussian random variables with sub-gaussian norm $K = 1/\sqrt{d}$.  Then for $i=1,2,$ there holds
\begin{equation}\label{concen-max-xi12}
\mathbb{P}\left(|\xi_i-\E[\xi_i]| \geq \epsilon\kappa_i\right)
\leq  4 \exp\left( - c\epsilon^2d\right), 
\end{equation}
where $c$ is an absolute constant and $\kappa_i, i = 1,2$, are constants dependent on the matrices $\Pi_iX$ and $\alpha\Pi_iPX$ but independent of $W$.
\end{lemma}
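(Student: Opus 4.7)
The plan is to reduce the concentration of each $\xi_i$ to the generic quadratic-plus-linear tail bound \eqref{Prob:|a+Bw|} by vectorizing the random matrix $W$. Setting $A_i:=\Pi_iX$ and $B_i:=\alpha\Pi_iPX$, I have $\xi_i\|A_i\|_F^2 = \|\Pi_iY\|_F^2 = \|A_i+B_iW\|_F^2$. Letting $w:=\mathrm{vec}(W)\in\R^{d^2}$, $a_i:=\mathrm{vec}(A_i)$, and $\widetilde B_i:=I_d\otimes B_i$, the identity $\mathrm{vec}(B_iW)=\widetilde B_iw$ gives $\xi_i\|A_i\|_F^2 = \|a_i+\widetilde B_iw\|^2$. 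The coordinates of $w$ inherit from $W$ the hypotheses of independence, mean zero, and sub-gaussian norm $K=1/\sqrt d$, so the setup of the generic bound is in force.

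Next, I would apply \eqref{Prob:|a+Bw|} together with its lower-tail counterpart to $\|a_i+\widetilde B_iw\|^2$, obtaining
\begin{equation*}
\Prob\!\left(\bigl|\|a_i+\widetilde B_iw\|^2-\E\|a_i+\widetilde B_iw\|^2\bigr|\ge \epsilon\bigl(\|\widetilde B_i\|_F^2+\|\widetilde B_i^Ta_i\|\bigr)\right)\le 4\exp(-c\epsilon^2/K^2).
\end{equation*}
Two standard Kronecker identities collapse the prefactor to quantities depending only on $A_i$ and $B_i$: $\|\widetilde B_i\|_F^2 = d\|B_i\|_F^2$, and $\widetilde B_i^Ta_i=\mathrm{vec}(B_i^TA_i)$, so that $\|\widetilde B_i^Ta_i\|=\|B_i^TA_i\|_F$. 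Substituting $K^2=1/d$ turns the exponent into $-c\epsilon^2 d$, and dividing the deviation inside the probability by $\|A_i\|_F^2$ converts $\|a_i+\widetilde B_iw\|^2$ into $\xi_i$. This yields \eqref{concen-max-xi12} with
\begin{equation*}
\kappa_i := \frac{d\,\|\alpha\Pi_iPX\|_F^2 + \|(\alpha\Pi_iPX)^T(\Pi_iX)\|_F}{\|\Pi_iX\|_F^2},
\end{equation*}
which manifestly depends only on $\Pi_iX$ and $\alpha\Pi_iPX$ (not on $W$), as required.

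The main delicate point will be justifying the two-sided form (probability bound $4\exp(\cdot)$ rather than $2\exp(\cdot)$) of \eqref{Prob:|a+Bw|}. Expanding $\|a_i+\widetilde B_iw\|^2=\|a_i\|^2+2\langle a_i,\widetilde B_iw\rangle+\|\widetilde B_iw\|^2$ isolates a sign-symmetric linear-in-$w$ cross term, to which Lemma~\ref{lem:concen-aw} applies two-sidedly, and a quadratic term $\|\widetilde B_iw\|^2$, for which one needs the lower-tail companion of Lemma~\ref{lem:concen-Bw}. This companion is standard for sub-gaussian quadratic forms (it follows from Hanson--Wright in the small-deviation regime) and merely introduces another factor of $2$ into the tail bound, which is absorbed into the constant $4$. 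Beyond this bookkeeping, the only remaining subtlety is the usual absorption of dimension-independent constants into the generic $c$, in accordance with the remark preceding the lemma.
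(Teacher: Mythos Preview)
Your proposal is correct and follows exactly the route the paper takes: the paper's proof is the single sentence ``This result follows directly from applying \eqref{Prob:|a+Bw|},'' and your vectorization $w=\mathrm{vec}(W)$, $\widetilde B_i=I_d\otimes B_i$ is precisely the way to cast $\|\Pi_iY\|_F^2$ into the form $\|a+Bw\|^2$ so that \eqref{Prob:|a+Bw|} applies. Your treatment is in fact more careful than the paper's, since you explicitly address the two-sided tail (accounting for the factor $4$ in \eqref{concen-max-xi12}) and compute the Kronecker identities that pin down $\kappa_i$.
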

\begin{proof}
This result follows directly from applying \eqref{Prob:|a+Bw|}.
\end{proof}

\subsection{Proof of Theorem~\ref{thm:r-rate}}

\begin{proof}
Lemmas~\ref{lem:concen-eta} and \ref{lem:concen-xi} together imply that $\eta$ is concentrated at zero for which the concentration probability of deviating from 0 by any fixed amount decays exponentially with $d$.  This implies that $\E[\eta]$ can be arbitrarily close to zero for $d$ sufficiently large.

Let $Y=X+\alpha PXW$ with $\sigma=1/\sqrt{d}$. Then equation \eqref{eq1:E(r)} can be written as
\begin{equation*} 
    \E[r(X,Y)] = 1+ \left(\frac{\alpha^2}{1+\alpha^2\delta^2}
    ((1-\omega)^2-\delta^2) + \Delta-\E[\eta]\right)\Sim(X),
\end{equation*}
with
\begin{equation*}
\Delta = \frac{\alpha^2}{1+\alpha^2\mu_2^2}(\mu_1^2-\mu_2^2) 
- \frac{\alpha^2}{1+\alpha^2\delta^2}((1-\omega)^2-\delta^2) > 0,
\end{equation*}
where $\Delta$ is bounded away from zero since the left-hand side of \eqref{bnd:tech} is bounded away from zero.  We note that the condition $\omega+\delta<1$ implies that the second term in $\Delta$ (without the negative sign) is positive.

Due to the increasing concentration of $\eta$ at 0 for sufficiently large $d$, there holds $\E[\eta] \le \Delta$.  By dropping the nonnegative term $\Delta-\E[\eta]$ from the above equality for $\E[r(X,Y)]$, we obtain the desired lower bound in \eqref{E5:rate} which indicates that TSE takes place in expectation.  
\end{proof}

To empirically observe the concentration of $\eta=\E[\xi_1]/\E[\xi_2]-\xi_1/\xi_2$ at zero, we conduct a set of experiments and present the results in Figure~\ref{fig:expectratio}.  Recall that $\eta$ is a function of two matrices, $X, Y \in \R^{n\times d}$.  We set $n=100$ and run $d = 10, 20, 40$.  For each $d$ value, we generate a sequence of matrices of the form $X=ev^T+tQ \in \R^{100 \times d}$ corresponding to a sequence of $t$ values in $(0,1]$, where the vector $v$ and matrix $Q$ are randomly chosen but otherwise fixed.  We note that $X$ is a perturbation to the rank-one matrix $ev^T$ with $t$ scaling  the size of the perturbation.  For each $X$, we generate $Y = X + PXW$ with 50 random samples of $W$ drawn from $\mathcal{N}(0, I/d)$, and then compute the average value of $\eta$ over the 50 samples. The attention matrix $P$ is computed using the softmax formula \eqref{softmax}.

%
\begin{figure}[htp]
\centering
\includegraphics[width=.9\textwidth,trim=0 0 0 0, clip]{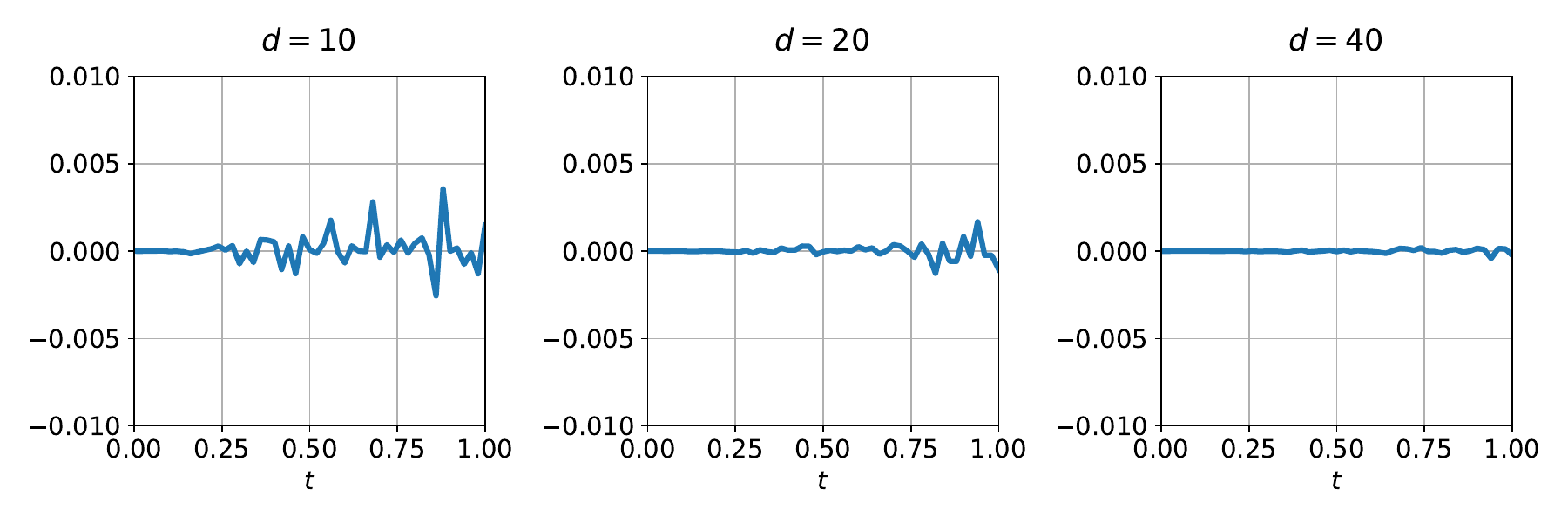}
\caption{
Average values of $\eta = \E [\xi_1] / \E [\xi_2] -  \xi_1/\xi_2$ over 50 samples of $W \in \R^{d\times d}$ for $Y = X + PXW$ where  $X$ = $ev^T+tQ$ and P is from the softmax formula.
}
\label{fig:expectratio}
\end{figure}

As we see from Figure~\ref{fig:expectratio}, (a) when $t$ is small (i.e., $X$ close to rank-one), (sample mean) $\E[\eta]$ is close to 0 even for $d=10$; (b) on the other hand, when $t$ is close to 1, $\E[\eta]$ varies more significantly; and (c) as $d$ increases, $\E[\eta]$ becomes closer to 0 even for larger $t$ values.  From these experiments, we observe that as $d$ becomes larger, indeed $\eta$ concentrates more at zero so that $\E[\eta]$ approaches 0.

\subsection{Proof of Proposition~\ref{prop:multihead}}

\begin{lemma} \label{lem:E(xi) in Apdx in multihead}
Given $X \in \R^{n\times d}$, $P_k \in \R^{n\times n}$ and $W_k \in \R^{d \times d/h}$, for $k=1,\cdots,h$, and $\alpha>0$, let
\[
Y = X+ \alpha [P_1XW_1\;\; P_2XW_2\;\; ...\;\; P_hXW_h].
\]
Then under Assumption~\ref{assumpt:0}(1), 
\begin{equation} \label{E6:xi1-2 in Apdx}
   \E[\xi_i] \equiv \E\left[\frac{\|\Pi_i Y\|_F^2}{\|\Pi_i X\|_F^2}\right] 
   = 1 + {\alpha^2 d \sigma^2} \bar{\mu}_i^2,  \;\; i=1,2,
\end{equation}
where  
\begin{equation*}
\bar{\mu}_i^2 := \frac{1}{h} \sum_{k=1}^h\frac{\|\Pi_i P_k X\|_F^2}{\|\Pi_i X\|_F^2}, \;\; i=1,2.
\end{equation*}
\end{lemma}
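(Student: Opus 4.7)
The plan is to mirror the proof of Lemma~\ref{lem:E(xi)} while tracking the block structure of the multi-head output. The only novelty compared to the single-head case is a column-wise partition of $X$ and the independence of the matrices $W_1,\ldots,W_h$. First, I would split the columns of $X$ to match the heads: write $X=[X_1\;\cdots\;X_h]$ with $X_k\in\R^{n\times d/h}$. Then the $k$-th column block of $Y$ is exactly $X_k+\alpha P_k X W_k$, and for any $Q\in\R^{n\times n}$ the Frobenius norm decomposes blockwise as
\[
\|QY\|_F^2=\sum_{k=1}^h \|QX_k+\alpha QP_kXW_k\|_F^2.
\]

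Next, I would expand each summand as
\[
\|QX_k+\alpha QP_kXW_k\|_F^2=\|QX_k\|_F^2+2\alpha\langle QX_k, QP_kXW_k\rangle+\alpha^2\|QP_kXW_k\|_F^2,
\]
and take the expectation with respect to the independent $W_k$'s. The cross terms vanish since $\E[W_k]=0$. For the quadratic term, the modified Assumption~\ref{assumpt:0}(1) (now with $d/h$ columns per $W_k$) together with a direct element-wise calculation yields $\E[W_k W_k^T]=(d/h)\sigma^2 I_d$, so
\[
\E\bigl[\|QP_kXW_k\|_F^2\bigr]=\operatorname{tr}\!\bigl(QP_kX\,\E[W_kW_k^T]\,X^TP_k^TQ^T\bigr)=(d/h)\sigma^2\|QP_kX\|_F^2.
\]
Using $\sum_{k=1}^h\|QX_k\|_F^2=\|QX\|_F^2$ from blockwise additivity of the Frobenius norm, summation over $k$ gives
\[
\E\bigl[\|QY\|_F^2\bigr]=\|QX\|_F^2+\alpha^2 d\sigma^2\cdot\frac{1}{h}\sum_{k=1}^h\|QP_kX\|_F^2.
\]

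To conclude, I would substitute $Q=\Pi_i/\|\Pi_iX\|_F$ for $i=1,2$, divide by $\|\Pi_iX\|_F^2$, and recognize the average $\bar{\mu}_i^2=h^{-1}\sum_k\|\Pi_i P_k X\|_F^2/\|\Pi_i X\|_F^2$, producing the stated identity $\E[\xi_i]=1+\alpha^2 d\sigma^2\bar{\mu}_i^2$. There is no real obstacle: the single-head argument transfers almost verbatim once the block decomposition is in place. The only point needing a moment of care is the bookkeeping of constants --- each $W_k$ contributes a per-head prefactor $(d/h)\sigma^2$ rather than $d\sigma^2$, and the factor $h$ gained from summing across heads is precisely what restores the global constant $d\sigma^2$ in front of the average $\bar{\mu}_i^2$.
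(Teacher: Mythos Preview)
Your proposal is correct and follows essentially the same route as the paper's proof: both expand $\|QY\|_F^2$, use $\E[W_k]=0$ to kill the cross terms and $\E[W_kW_k^T]=(d/h)\sigma^2 I_d$ for the quadratic terms, then substitute $Q=\Pi_i/\|\Pi_iX\|_F$. The only cosmetic difference is that you make the column-block decomposition $X=[X_1\cdots X_h]$ explicit, whereas the paper absorbs it into the single line $\E\|Q(X+\alpha[P_1XW_1,\dots,P_hXW_h])\|_F^2=\|QX\|_F^2+\alpha^2\sum_k\E\|QP_kXW_k\|_F^2$; also note that once you set $Q=\Pi_i/\|\Pi_iX\|_F$ the extra ``divide by $\|\Pi_iX\|_F^2$'' is already built in.
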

\begin{proof}
The proof follows a similar line as in that of Lemma~\ref{lem:E(xi)}.
First, we note that under Assumption~\ref{assumpt:0}(1), 
$\E[W_k]=0$ and $\E[W_kW_k^T] = (d/h)\sigma^2I_d$ for $k=1,\cdots,h$.

Given $Q \in \R^{n\times n}$, noting that expected values of terms linear in $W_k$ all vanish, we have
\[
\E \left[\|Q(X+\alpha[P_1XW_1, \cdots, P_hXW_h])\|_F^2 \right] = 
\|QX\|_F^2 + \alpha^2\sum_{k=1}^h\E \left[\|QP_kXW_k\|_F^2 \right],
\] 
In view of $\E[W_kW_k^T] = (d/h)\sigma^2I$, we obtain
\begin{eqnarray*}
\E \left[ \|Q(X+\alpha[P_1XW_1, \cdots, P_hXW_h])\|_F^2 \right] 
=  \|QX\|_F^2 +\alpha^2 d\sigma^2\;\frac{1}{h}\sum_{k=1}^h\|QP_kX\|_F^2.
\end{eqnarray*}
Finally, the expressions for $\E[\xi_i]$ follow from substituting the matrix $Q$ by $\Pi_i/\|\Pi_i X \|_F$, for  $i=1,2$, which completes the proof.
\end{proof}

Now Proposition~\ref{prop:multihead} follows directly from Lemma~\ref{thm:E-factor} and Lemma~\ref{lem:E(xi) in Apdx in multihead}.

\end{appendix}
\end{document}